\documentclass[lettersize,journal]{IEEEtran}
\IEEEoverridecommandlockouts
\usepackage{cite}
\usepackage{amsmath,amssymb,amsfonts}
\usepackage{algorithmic}
\usepackage{graphicx}
\usepackage{textcomp}
\usepackage{subfigure}
\usepackage{xcolor}
\usepackage{xpatch}
\usepackage{amsfonts}
\usepackage{bm}
\usepackage{booktabs}
\usepackage{algorithm}
\usepackage{multirow,bm,bbm,array,setspace}
\usepackage{textcomp}
\usepackage{psfrag}
\usepackage{pstricks,enumerate}
\usepackage{bbm}
\usepackage{threeparttable}
\usepackage{placeins}
\usepackage{stfloats}

\makeatletter
\newcommand{\distas}[1]{\mathbin{\overset{#1}{\kern\z@\sim}}}%
\newsavebox{\mybox}\newsavebox{\mysim}
\newcommand{\distras}[1]{%
  \savebox{\mybox}{\hbox{\kern1pt$\scriptstyle#1$\kern1pt}}%
  \savebox{\mysim}{\hbox{$\sim$}}%
  \mathbin{\overset{#1}{\kern\z@\resizebox{\wd\mybox}{\ht\mysim}{$\sim$}}}%
}
\ExplSyntaxOn
\cs_new:Npn \bibColoredItems #1#2
  {
    \clist_map_inline:nn {#2} {\cs_new:cpn {bib@colored@##1} {#1}} 
}
\ExplSyntaxOff

\newcommand\bib@setcolor[1]{%
  \ifcsname bib@colored@#1\endcsname
    \expanded{\noexpand\color{\csname bib@colored@#1\endcsname}}%
  \else
    \normalcolor
  \fi
}

\IfPackageLoadedTF{hyperref}{\@tempswatrue}{\@tempswafalse}
\if@tempswa
  \xpatchcmd\@bibitem {\H@item}{\bib@setcolor{#1}\H@item}{}{\PatchFailed}
  \xpatchcmd\@lbibitem{\H@item}{\bib@setcolor{#2}\H@item}{}{\PatchFailed}
\else
  \xpatchcmd\@bibitem {\item}  {\bib@setcolor{#1}\item}  {}{\PatchFailed}
  \xpatchcmd\@lbibitem{\item}  {\bib@setcolor{#2}\item}  {}{\PatchFailed}
\fi

\makeatother
\ifCLASSINFOpdf
\else
\fi

\newcommand{\tr}{\mathrm{Tr}}
\newcommand{\bv}{\bm v}
\newcommand{\bE}{\bm E}

\newcommand{\bu}{\bm u}

\newcommand{\bW}{\bm W}
\newcommand{\bx}{\bm x}
\newcommand{\bz}{\bm z}
\newcommand{\bX}{\bm X}
\newcommand{\by}{\bm y}
\newcommand{\bp}{\bm p}
\newcommand{\bY}{\bm Y}

\newcommand{\bD}{\bm D}
\newcommand{\bL}{\bm L}
\newcommand{\bT}{\bm T}

\newcommand{\bI}{\bm I}
\newcommand{\bA}{\bm A}
\newcommand{\ba}{\bm a}
\newcommand{\bB}{\bm B}
\newcommand{\bP}{\bm P}

\newcommand{\bF}{\bm F}

\newcommand{\bZ}{\bm Z}

\newcommand{\bmu}{\bm \mu}

\newtheorem{proposition}{Proposition}

\newtheorem{theorem}{Theorem}

\newtheorem{remark}{Remark}

\begin{document}

\title{Normalized Cuts and Data Clustering via Matrix Fractional Programming}
\author{
\IEEEauthorblockN{
Yannan Chen, \IEEEmembership{Member,~IEEE}, Beichen Huang, Licheng Zhao,\\Chris Ding, and Kaiming Shen, \IEEEmembership{Senior Member,~IEEE}
} 
\thanks{The work of Yannan Chen, Beichen Huang, and Kaiming Shen was supported in part by National Natural Science Foundation of China (NSFC) under Grant 12426306. The work of Licheng Zhao was supported in part by Guangdong Basic and Applied Basic Research Foundation under Grant 2024A1515010154. This paper was presented in part at the 38th Annual Conference on Neural Information Processing Systems (NeurIPS 2024), Vancouver, BC, Canada, December 2024~\cite{chen2024multidimensional}. \emph{(Corresponding author: Kaiming Shen)}.
    
Yannan Chen is with Key Laboratory of Radio Frequency Heterogeneous Integration, Shenzhen University, Shenzhen 518060, China, and School of Science and Engineering, The Chinese University of Hong Kong (Shenzhen), Shenzhen 518172, China (e-mail: yannanchen@link.cuhk.edu.cn).

Beichen Huang is an undergraduate student at McMaster University, Canada (e-mail: huangb21@mcmaster.ca). The work was done during his internship at The Chinese University of Hong Kong (Shenzhen).

Licheng Zhao is with Shenzhen Research Institute of Big Data, Shenzhen 518172, China (e-mail: zhaolicheng@sribd.cn).

Chris Ding is with the School of Data Science, The Chinese University of Hong Kong (Shenzhen), 518172, China (e-mail: chrisding@cuhk.edu.cn).

Kaiming Shen is with School of Science and Engineering, The Chinese University of Hong Kong (Shenzhen), Shenzhen 518172, China (e-mail:  shenkaiming@cuhk.edu.cn).}
}

\maketitle

\begin{abstract}
The normalized cut (NCut) problem is a fundamental one in the area of data clustering. Despite its fractional structure, the NCut problem is seldom considered from a fractional programming (FP) perspective in the existing literature. The main reason is that the conventional FP method is inadequate for this problem case: the classical Dinkelbach's transform can only handle a single ratio and hence is limited to the two-class clustering, whereas the multi-class clustering involves multiple ratio terms. This work proposes a novel so-called multidimensional quadratic transform that is capable of dealing with the multi-ratio case; it recasts the NCut problem into a bipartite matching problem---which can be immediately solved by linear search. Furthermore, we explore the connection between the proposed FP method and the minorization-maximization (MM) theory, and in turn characterize the convergence behavior of our algorithm. Finally, we conduct experiments to numerically demonstrate the advantages of the proposed FP-based algorithm in a variety of data clustering and image segmentation tasks.
\end{abstract}

\begin{IEEEkeywords}
Normalized cut (NCut), graph clustering, fractional programming (FP), multidimensional quadratic transform.
\end{IEEEkeywords}

\section{Introduction}
\IEEEPARstart{F}{ractional} programming (FP) is aimed at those optimization problems involving ratio terms, as extensively considered in the areas of physics, economics, management science, signal processing, computer science, and information theory \cite{stancu2012fractional,bajalinov2003linear,schaible1974parameter}. This paper explores a novel use of FP in solving the normalized cut (NCut) problem for graph clustering \cite{NCut}. Not only the application but also the method of FP is new in this work. We develop the existing quadratic transform \cite{shen2018fractional} into the so-called \emph{multidimensional quadratic transform} to account for multiple ratios in the NCut problem. In contrast, the traditional FP method, Dinkelbach's transform \cite{dinkelbach1967nonlinear}, is limited to the single-ratio case and thus cannot work for the NCut problem. By the proposed new FP method, the original difficult NCut problem with the 0-1 discrete constraint can be converted to a weighted bipartite matching problem that can be efficiently addressed via linear search.


Data clustering has been studied in the literature from a variety of perspectives such as K-means \cite{kmeans}, hierarchical clustering \cite{hierarchical}, spectral clustering (SC) \cite{NCut1}, graph clustering \cite{nie2010improved,zhao2025balanced}, co-clustering\cite{dhillon2003information,xue2024co}, and high-density clustering \cite{ester1996density}. Graph clustering is of particular interest because of its flexibility in data types, not requiring the data clusters to be center-based as those geometry-based clustering algorithms \cite{yan2009fast}. In graph clustering, each data point is visualized as a vertex in a weighted undirected graph, where the weight of each link reflects the correlation between the two incident vertices. The task is to decompose the graph into multiple subgraphs, each subgraph representing a cluster. Intuitively speaking, we wish to put those highly correlated vertices in the same subgraph, while maximizing the ``distance'' between any two subgraphs. Depending on the distance metric, we would obtain different objective functions for the graph clustering problem, e.g., min cut \cite{mincut}, ratio cut \cite{ratiocut}, min-max cut \cite{minmaxcut}, and the NCut \cite{NCut}. This work adopts the NCut metric not only because it is most popular in the area but also because it yields stable performance and prevents cluster imbalance \cite{NCut}.

The NCut problem is a nonlinear optimization problem with the 0-1 discrete constraints, the complexity of which is shown to be NP-complete \cite{NCut}. {The SC method is a conventional approach to the NCut problem \cite{NCut,NCut1}. Its main idea is to recast the NCut problem into the spectral domain and subsequently perform clustering. In this sense, SC and its variants can be viewed as two-stage methods: they first solve a relaxed spectral embedding problem and then recover discrete cluster assignments through a post-processing step such as K-means \cite{NCut}, spectral rotation \cite{huang2013spectral}, or discretization refinement \cite{zhang2023discretize}.} However, the SC method can barely provide any performance guarantees; after all, it is merely a heuristic to turn the NCut problem to the new SC problem. Other approaches resort to the nonconvex optimization theory. For instance, the semidefinite relaxation (SDR) method \cite{bie2006fast} provides a bounded convex approximation for the NCut problem, while \cite{ghanem2010dinkelbach} employs the classical Dinkelbach's algorithm to convexify the NCut problem. However, the aforementioned two methods are confined to the two-class NCut problem.

The multi-class NCut problem is much more challenging. The Fast Coordinate Descent (FCD) algorithm in \cite{FCD} addresses the multi-class NCut problem directly by using the standard method of block coordinate descent. In comparison, the Direct Normalized Cut (DNC) in \cite{DNC} is much less straightforward, which approximates the original problem by using a  bound on the NCut objective. However, the bound proposed in \cite{DNC} incurs a high computational complexity. To address this issue, the Fast Iterative Normalized Cut (FINC) in \cite{FINC} suggests using a different closed-form bound instead. But this new bound in FINC does not lead to an easy problem after approximation; the new problem can only be addressed in a heuristic fashion as discussed in \cite{FINC}. In contrast, the building block of this paper is a novel bound on the NCut objective, which stems from the FP technique and is easy to compute. Most importantly, in light of the proposed lower bound, the original nonlinear discrete NCut problem can be recast into a weighted bipartite matching problem amenable to the linear search method. 
Moreover, it is worth mentioning that the proposed FP approach can be extended to those more complicated {data clustering} tasks such as the multi-view graph clustering \cite{shi2021flexible,qin2022consistency}. {Graph clustering has also been applied in areas such as computer vision\cite{zhong2021graph}, bioinformatics\cite{higham2007spectral}, network analysis\cite{bhowmick2024dgcluster}, and remote sensing \cite{guan2025structure}, as well as mobile-computing scenarios including intelligent transportation\cite{yao2026efl}, UAV coordination\cite{xu2024semantic}, and sensor/IoT data organization \cite{jiang2025representation,li2024demodulation}. These applications suggest that the proposed FP-based NCut optimizer can be used for graph-based inference tasks in broader domains.}

The main results of this paper are summarized in the following:
\begin{itemize}
\item We develop a multidimensional quadratic transform for the FP purpose. This new method is more powerful than the traditional Dinkelbach's method \cite{dinkelbach1967nonlinear} in that it accounts for multiple ratios. The proposed method is also more general than the existing quadratic transform \cite{shen2018fractional,shen2018fractional1} in that it can handle the matrix ratios. This feature plays a key role in converting the NCut problem to a bipartite matching problem.

\item We provide a strengthened convergence analysis of the proposed FP-based clustering method. Building on the minorization-maximization (MM) interpretation established in our preliminary conference work, we further establish stationary-point convergence under differentiability conditions and characterize the local convergence rate of the proposed clustering algorithm.

\item  {We develop a scalable realization of the proposed method for sparse graph settings and discuss how the similarity matrix can be constructed in practice. For small datasets, a fully connected graph constructed from a positive-definite kernel, naturally yields a positive-semidefinite similarity (PSD) matrix. For large datasets, we adopt a sparse anchor-graph construction that also produces a PSD similarity matrix. We further show that an indefinite similarity matrix can be transformed into an equivalent PSD form.}

\item {We evaluate the saturation value of the NCut objective by the FP method. Specifically, we start from the perfect initialization (based on the ground truth) and perform the FP optimization. As a result, the quality measure of clustering (i.e., error) starts at 0\% and increases, even though the objective value is monotonically improved. This experiment reveals a saturation region of the NCut objective. Within this region, solutions with smaller external clustering error cannot be meaningfully distinguished by the NCut objective alone.}
\end{itemize}

An earlier version of this work appeared in NeurIPS 2024\cite{chen2024multidimensional}. Compared with the conference version, the present journal article further strengthens the convergence analysis, develops a scalable sparse anchor-graph realization for large-scale problems, and substantially expands the empirical study with larger datasets, additional spectral-clustering baselines, and the saturation analysis of the NCut objective.

The remainder of this paper is organized as follows. Section \ref{sec:NCut and SC} reviews the NCut problem and the spectral clustering method. Section \ref{sec:FP} reviews the classic FP techniques, and proposes the novel FP technique named multidimensional quadratic transform. We further show the connection between the proposed approach and the MM theory, thereby establishing its convergence behaviors. Section \ref{sec:method} derives the proposed fractional programming-based clustering (FPC) algorithm.  Section
\ref{sec:Experiments} presents numerical results. Finally, Section \ref{sec:conclusion} concludes the
paper.

\emph{Notation:} Here and throughout, bold lower-case letters represent vectors while bold upper-case letters represent matrices. For matrix $\bA$, $\bA^{\top}$ is its transpose, $\bA^{-1}$ is its inverse, $\lambda_{\text{max}}(\bA)$ is its largest eigenvalue, $\bA^{\frac{1}{2}}$ is its symmetric square root\cite{boyd2004convex}, and $\mathrm{vec}(\bA)$ is the vectorization of matrix $\bA$ by stacking its columns to get one column vector. For vector $\ba$, $\ba^{\top}$ is its transpose, and $\|\ba\|$ is its norm. The set of real numbers, the set of nonnegative numbers, and the set of strictly positive numbers are denoted by $\mathbb R$, $\mathbb R_+$, and $\mathbb R_{++}$, respectively. The set of positive semi-definite matrices and the set of positive definite matrices are denoted by $\mathbb{S}_{+}$, and $\mathbb{S}_{++}$, respectively. Moreover, $\bA\preceq \bB$ indicates that $(\bB-\bA)\in\mathbb S_{+}$, and $\bA\prec \bB$ indicates that  $(\bB-\bA)\in\mathbb S_{++}$. 

\section{Preliminary}
\label{sec:NCut and SC}

This section comprises two parts. First, we state the NCut problem for data clustering. Second, we review the classic SC method that serves as a major benchmark in our experiments.

\subsection{NCut Problem for Graph Clustering}
Suppose there are $N$ data points in total. Use $i,j\in\{1,2,\ldots,N\}$ to index these data points. For a pair of data points $i$ and $j$, the similarity between them is quantified as $0\le w_{ij}\le 1$. By symmetry, we have $w_{ij} = w_{ji}$. In the graph theory context, with each data point visualized as a vertex, the edge between vertex $i$ and vertex $j$ is assigned the weight $w_{ij}$ (or $w_{ji}$). Denote by $\mathcal V$ the set of vertices, and $\mathcal E$ the set of edges. The resulting graph $G=(\mathcal V,\mathcal E)$ can be recognized as a weighted undirected graph.

For each vertex $i$, its \emph{degree} $d_i$ is the sum weights across all the incident edges:
\begin{equation}
     d_{i}=\sum_{j=1}^N w_{ij}.
\end{equation}
Dividing the $N$ data points into $K>1$ clusters is equivalent to partitioning $\mathcal V$ into $K$ disjoint subsets $\{\mathcal V_1,\mathcal V_2,\ldots,\mathcal V_K\}$, where $\bigcup^K_{k=1}\mathcal V_k=\mathcal V$ and $\mathcal V_k\cap\mathcal V_{k'}=\emptyset$ for any $k\ne k'$. In principle, data clustering aims to group together those data points that are sufficiently similar to each other. Equivalently, we wish to minimize the similarity between any two clusters. Toward this end, one traditional strategy is to minimize
\begin{equation}
    \mathrm{cut}(\mathcal V_1,\mathcal V_2,\ldots,\mathcal V_K)=\frac{1}{2}\sum_{k=1}^K\sum_{i\in\mathcal{V}_k}\sum_{j\in\bar{\mathcal{V}}_k}w_{ij},
\end{equation}
where $\bar{\mathcal{V}}_k$ is the complement of $\mathcal{V}_k$, i.e., $\bar{\mathcal{V}}_k = \mathcal{V}\backslash\mathcal{V}_k$. An example of two-class clustering is shown in Fig.~\ref{fig:cut}. 
However, minimizing $\mathrm{cut}(\mathcal V_1,\mathcal V_2,\ldots,\mathcal V_K)$ alone can be problematic---it tends to put most data points in one particular cluster while leaving other clusters almost empty, namely the cluster imbalance \cite{von2007tutorial}. 

\begin{figure}[t]
    \centering
    \includegraphics{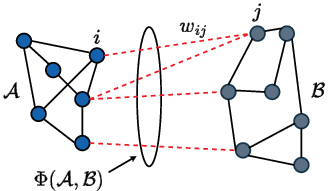}
    \vspace{-0.5em}
    \caption{Graph cut between two disjoint subsets  $\mathcal{A}$ and $\mathcal{B}$.}
    \label{fig:cut}
\end{figure}

To address the above issue, a common method \cite{NCut,NCut1} is to regularize the cluster volume by considering the normalized cut:
\begin{align}
    \mathrm{ncut}(\mathcal V_1,\mathcal V_2,\ldots,\mathcal V_K)&=\frac{1}{2}\sum_{k=1}^K\frac{\sum_{i\in\mathcal{V}_k}\sum_{j\in\bar{\mathcal{V}}_k}w_{ij}}{\sum_{i\in\mathcal{V}_k}d_i}\notag\\
&=\frac{1}{2}\sum_{k=1}^K \frac{\bx_k^\top(\bD - \bW)\bx_k}{\bx_{k}^\top\bD\bx_{k}},
\label{ncut def}
\end{align}
where $\bW=[w_{ij}]\in\mathbb R^{N\times N}$, $\bD=\mathrm{diag}[d_1,d_2,\ldots,d_N]\in\mathbb R^{N\times N}$, and $\bx_k=(x_{1k},\ldots,x_{Nk})^\top\in\{0,1\}^N$, where each entry $x_{ik}\in\{0,1\}$ equals $1$ if data point $i$ is assigned to cluster $k$ and equals $0$ otherwise. We seek the optimal clustering decision $\{x_{ik}\}$ that minimizes $\mathrm{ncut}(\mathcal V_1,\mathcal V_2,\ldots,\mathcal V_K)$. Further, because $\mathrm{ncut}(\mathcal V_1,\mathcal V_2,\ldots,\mathcal V_K)= \frac{1}{2}K-\frac12\sum_{k=1}^K \frac{\bx_k^\top\bW\bx_k}{\bx_{k}^\top\bD\bx_{k}}$,
the NCut minimization problem boils down to
\begin{subequations}
\label{prob:NCut}
\begin{align}
    \underset{\bx_{k}}{\text{maximize}}&\quad \sum_{k=1}^K\frac{\bx_k^\top\bW\bx_k}{\bx_{k}^\top\bD\bx_{k}}
\label{obj:NCut}\\
 \text{subject to}&\quad \sum_{k=1}^K x_{ik} = 1,\; i = 1,\ldots,n \label{cons:unique cluster}\\
        &\quad\, x_{ik}\in\{0,1\}, \; i=1\ldots,n,\;k =1,\ldots. K,\label{cons:discrete value}
\end{align}
\end{subequations}
where the two constraints \eqref{cons:unique cluster} and \eqref{cons:discrete value} state that each data point must be assigned to one unique cluster. The difficulties of the above problem can be recognized with two respects. First, the clustering variables $\{x_{ik}\}$ are discrete. Second, even when every $x_{ik}$ is relaxed to be a continuous variable on the interval $[0,1]$, the problem is still nonconvex.

\begin{remark}
Under the binary constraint $x_{ik}\in\{0,1\}$, we have
\[
\bx_k^\top\bD\bx_k
=
\bm{\delta}^\top\bx_k,
\qquad
\bm{\delta}=[d_1,\ldots,d_N]^\top.
\]
Hence, problem~\eqref{prob:NCut} is equivalently expressed as

\begin{subequations}
\label{prob:NCut-linear}
\begin{align}
    \underset{\bx_{k}}{\text{maximize}}&\quad \sum_{k=1}^{K}
\frac{\bx_k^\top\bW\bx_k}{\bm{\delta}^\top\bx_k},
\label{obj:Linear obj}\\
 \text{subject to}&\quad \eqref{cons:unique cluster}, \eqref{cons:discrete value}.
\end{align}
\end{subequations}
\end{remark}

\subsection{Conventional Spectral Clustering (SC) Method}
\label{subsec:SC}

The SC method was initially proposed 25 years ago \cite{NCut} and remains to be a popular approach to the NCut problem. 
First, the primal problem in \eqref{prob:NCut} can be rewritten as
\begin{subequations}
\label{SC:rewrite prob}
    \begin{align}
         \underset{\bX}{\text{minimize}}&\quad \tr\big((\bX^\top\bD\bX)^{-1}\bX^\top\bL\bX\big)\\
        \text{subject to}&\quad \eqref{cons:unique cluster},\;\eqref{cons:discrete value},
    \end{align}
\end{subequations}
where $\bX=[\bx_1,\ldots,\bx_K]\in\mathbb R^{N\times K}$, and the Laplacian matrix $\bL$ equals $ \bD-\bW$. Letting 
\begin{equation}
\label{P}
\bP = \bX(\bX^\top \bD\bX)^{-\frac12},
\end{equation}
we have $\bP^\top\bD\bP=\bI$. Now we change the variable as $\bT=\bD^{\frac12}\bP$, so that the optimization in $\bX$ is converted to the optimization in $\bT$. The SC method preserves the identity $\bP^\top\bD\bP=\bI$ by requiring $\bT^\top\bT=\bI$, and ignores the original constraints \eqref{cons:unique cluster} and \eqref{cons:discrete value}, thereby approximating \eqref{prob:NCut} as:
\begin{subequations}
\label{prob:NCut SC}
    \begin{align}
        \underset{\bT}{\text{minimize}}&\quad \tr(\bT^\top\bD^{-\frac 12}\bL\bD^{-\frac 12}\bT)\\
        \text{subject to}&\quad \bT^\top\bT=\bI.
    \end{align}
\end{subequations}
We remark that rewriting \eqref{prob:NCut} as \eqref{prob:NCut SC} is completely heuristic. The new problem \eqref{prob:NCut SC} is a standard trace minimization problem that can be efficiently solved as follows. We first find the $K$ largest eigenvalues $\lambda_1\ge\lambda_2\ge\ldots\ge\lambda_K$ of $\bD^{-\frac 12}\bL\bD^{-\frac 12}$ and their associated eigenvectors $\{\bv_1,\bv_2,\ldots,\bv_K\}$ being linearly independent, and then obtain the solution as $\bT=[\bv_1,\bv_2,\ldots,\bv_K]$. Next, we recover the solution of $\bP$ as $\bP=\bD^{-\frac12}\bT$. Finally, we wish to recover the original variable $\bX$ from $\bP$ in \eqref{P}, but this is numerically difficult. A well-known heuristic \cite{von2007tutorial} is to recover $\bX$ by the K-means algorithm \cite{kmeans}. Specifically, letting $\bp_i\in\mathbb R^{1\times K}$ be the $i$th row of $\bP$, we apply the K-means algorithm to group the vectors $\bp_1,\ldots,\bp_N$ into $K$ clusters $\mathcal C_1,\ldots,\mathcal C_K$. Finally, each $x_{ik}$ is determined as 
\begin{equation}
    x_{ik}=
    \begin{cases}
        \;\;1\quad &\text{if}\;\; \bp_i \in \mathcal C_k;\vspace{0.3em}\\
        \;\;0\quad &\text{otherwise.}
    \end{cases}
\end{equation}
To summarize, although the SC method has been extensively considered for the NCut clustering problem, it lacks mathematical justification in two respects: first, the new problem \eqref{prob:NCut SC} may not be equivalent to the original problem \eqref{prob:NCut}; second, after obtaining the solution of $\bP$, the recovery of $\bX$ in \eqref{P} is difficult and can only be addressed heuristically via the K-means algorithm.

\section{Fractional Programming (FP)}
\label{sec:FP}
The NCut problem in \eqref{prob:NCut} is fractionally structured in the sense that its objective function is a sum of ratios. This quick observation strongly suggests that the NCut is amenable to FP, but it turns out that very few previous works in the literature have adopted the FP approach. In the rest of this section, we first review the conventional FP methods to show why they are rarely considered for the NCut, and then introduce a recently proposed FP technique called the multidimensional quadratic transform---which forms the building block of our proposed clustering algorithm as introduced in Section \ref{sec:method}.

\subsection{Existing FP Methods}
\label{subsec:conventional FP methods}

The early studies in the FP field are restricted to the \emph{single-ratio} problem:
\begin{align}
\label{single ratio prob}
    \underset{x\in\mathcal{X}}{\text{maximize}}\quad\frac{A(x)}{B(x)},
\end{align}
where $A(x)\ge0$ is a nonnegative function, $B(x)>0$ is a strictly positive function, and $\mathcal X$ is a nonempty constraint set on $x$. In the literature, many works further assume that $A(x)$ is concave in $x$, $B(x)$ is convex in $x$, and $\mathcal X$ is a convex set, namely the \emph{concave-convex condition}. Notice that problem \eqref{single ratio prob} is still nonconvex in general under the concave-convex condition, so the direct solving of \eqref{single ratio prob} is difficult. The classic Dinkelbach's transform in essence aims to decouple the ratio:

\begin{proposition}[Dinkelbach's Transform \cite{dinkelbach1967nonlinear}]
\label{prop:Dinkelbach}
The single-ratio problem \eqref{single ratio prob} is equivalent to
\begin{align}
\label{prob Dinkelbach}
    \underset{x\in\mathcal{X}}{\text{maximize}}\quad A(x)-{yB(x)},
\end{align}
where the auxiliary variable $y$ is iteratively updated as $y={A(x)}/{B(x)}$.
\end{proposition}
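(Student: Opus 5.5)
The plan is to make precise what ``equivalent'' means in Proposition~\ref{prop:Dinkelbach}, and then prove it in two parts: a fixed-point characterization of the optimal value, and monotone convergence of the iterative update of $y$. Define the parametric function
\[
F(y)=\max_{x\in\mathcal X}\,\bigl(A(x)-yB(x)\bigr),
\]
and let $y^\star=\max_{x\in\mathcal X}A(x)/B(x)$, assuming the maxima are attained. The first goal is to show that $x^\star$ solves \eqref{single ratio prob} if and only if $x^\star$ solves \eqref{prob Dinkelbach} with $y=y^\star$, and that in that case $F(y^\star)=0$.

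For the fixed-point part, take any $x\in\mathcal X$. Since $B(x)>0$ and $A(x)/B(x)\le y^\star$, we get $A(x)-y^\star B(x)\le 0$. Equality holds exactly when $x$ attains the maximal ratio. Hence $F(y^\star)=0$, and the maximizers of $A(x)-y^\star B(x)$ are exactly the maximizers of the ratio. This gives both directions of the equivalence.

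For the iteration, start from $y^{(t)}=A(x^{(t)})/B(x^{(t)})$ and let $x^{(t+1)}$ maximize $A(x)-y^{(t)}B(x)$. Since $x^{(t)}$ is feasible,
\[
A(x^{(t+1)})-y^{(t)}B(x^{(t+1)})\;\ge\; A(x^{(t)})-y^{(t)}B(x^{(t)})=0.
\]
Dividing by $B(x^{(t+1)})>0$ gives $y^{(t+1)}\ge y^{(t)}$. The sequence $\{y^{(t)}\}$ is therefore nondecreasing and bounded above by $y^\star$, so it converges.

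The remaining step, and the main obstacle, is to show that the limit equals $y^\star$ rather than some smaller value. Two tools serve here. First, $F$ is convex, being a pointwise maximum of affine functions of $y$. Second, $F$ is strictly decreasing, since $B>0$; assuming $B\ge\beta>0$ on $\mathcal X$, $F$ has slope at most $-\beta$. At the limit $\bar y$, I would use continuity of $F$ and the inequality $F(y^{(t)})\ge 0$ together with $F(y^{(t)})\le B(x^{(t+1)})\,(y^{(t+1)}-y^{(t)})\to 0$; this requires $B$ bounded above on $\mathcal X$. That gives $F(\bar y)=0$, and strict monotonicity of $F$ then forces $\bar y=y^\star$.

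If $\mathcal X$ is finite, as in the 0-1 setting of interest, the argument is simpler. The ratio values strictly increase until the iteration stalls, and stalling means $F(y^{(t)})=0$, which already implies optimality. So the iteration terminates finitely at a global optimum.
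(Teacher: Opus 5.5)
The paper does not prove this proposition. It is quoted from Dinkelbach (1967), and the only supporting text is the subsequent claim that alternating the updates of $y$ and $x$ reaches the global optimum of \eqref{single ratio prob}. Your argument is correct and is essentially the classical parametric argument: the root characterization $F(y^\star)=0$, the ascent identity, and uniqueness of the root of a strictly decreasing $F$. Your finite-$\mathcal X$ argument is also correct, and it is the case that matters for the 0-1 setting.

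Some refinements:
\begin{itemize}
\item The ascent relation is an equality, $F(y^{(t)})=B(x^{(t+1)})\,(y^{(t+1)}-y^{(t)})$, not just an inequality.
\item The lower bound $B\ge\beta>0$ is unnecessary. Strict decrease of $F$ follows from $B>0$ and attainment alone: if $y_1<y_2$ and $x_2$ attains $F(y_2)$, then $F(y_2)<A(x_2)-y_1B(x_2)\le F(y_1)$.
\item Continuity of $F$ can be bypassed by monotonicity. Since $y^{(t)}\le\bar y\le y^\star$, you have $0=F(y^\star)\le F(\bar y)\le F(y^{(t)})\to 0$. The upper bound on $B$ is still what makes $F(y^{(t)})\to 0$.
\end{itemize}

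What the classical treatment adds is a rate. The function $A(x^{(t+1)})-yB(x^{(t+1)})$ is an affine minorant of $F$ that touches it at $y^{(t)}$, so $-B(x^{(t+1)})$ is a subgradient of $F$ there. Hence the update $y^{(t+1)}=y^{(t)}+F(y^{(t)})/B(x^{(t+1)})$ is exactly a Newton step on $F$. This is how the standard analysis obtains superlinear convergence under compactness and continuity assumptions, whereas your argument gives convergence only.
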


Observe that the new problem \eqref{prob Dinkelbach} is convex in $x$ for fixed $y$ under the concave-convex condition, and hence can be efficiently solved by the standard optimization method. Importantly, solving for $x$ in \eqref{prob Dinkelbach} with $y$ iteratively updated guarantees convergence to the global optimum of the original problem \eqref{single ratio prob}. However, it is difficult to extend Dinkelbach's transform to the multi-ratio problems (except for the max-min-ratios case \cite{crouzeix1985algorithm}). As such, the use of Dinkelbach's transform in the NCut area is limited to the two-class clustering that only needs to optimize a single ratio \cite{ghanem2010dinkelbach}. 

We now consider $K>1$ pairs of the numerator function $A_k(x)\ge0$ and denominator function $B_k(x)>0$ along with a nonempty constraint set $\mathcal X$. A \emph{sum-of-ratios} problem is then formulated as
\begin{equation}
\label{prob:scalar FP}
    \underset{x\in\mathcal{X}}{\text{maximize}}\quad \sum_{k=1}^K\frac{A_k(x)}{B_k(x)}.
\end{equation}
It is tempting to decouple each ratio $A_k(x)/B_k(x)$ by using Dinkelbach's transform separately, but the resulting new problem is not equivalent to problem \eqref{prob:scalar FP}. Consequently, the classic Dinkelbach's transform does not work for the NCut with general $K$ clusters. A valid method to decouple multiple ratios is presented in the following proposition.

\begin{proposition}[Quadratic Transform \cite{shen2018fractional}]
\label{prop:scalar QT}
    The sum-of-ratios problem \eqref{prob:scalar FP} is equivalent to 
    \begin{align}  
     \label{prob:scalar FP QT}
    \underset{x\in\mathcal{X},\, y_k\in\mathbb{R}}{\text{maximize}}\quad &\sum_{k=1}^K \Big(2y_k\sqrt{A_k(x)}-y_k^2B_k(x)\Big),
    \end{align}
in the sense that $x^\star$ is a solution to \eqref{prob:scalar FP} if and only if $(x^\star,y^\star)$ is a solution to \eqref{prob:scalar FP QT}, where an auxiliary variable $y_k$ is introduced for each ratio term $A_k(x)/B_k(x)$. 
\end{proposition}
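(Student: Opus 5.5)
The plan is to eliminate the auxiliary variables $y_k$ by maximizing over them first, for each fixed $x\in\mathcal X$, and to show that the resulting value function equals the original sum-of-ratios objective. Equivalence of the two problems then follows from a standard partial-maximization argument.

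Fix $x\in\mathcal X$ and write
\[
f_k(y_k)=2y_k\sqrt{A_k(x)}-y_k^2B_k(x).
\]
Since $B_k(x)>0$, each $f_k$ is a strictly concave quadratic in $y_k\in\mathbb R$. The problem is separable across $k$, so I will maximize each term on its own. Setting the derivative to zero gives the unique maximizer
\[
y_k^\star(x)=\frac{\sqrt{A_k(x)}}{B_k(x)}.
\]
This is well defined because $A_k(x)\ge0$. Substituting back yields
\[
f_k\big(y_k^\star(x)\big)=\frac{2A_k(x)}{B_k(x)}-\frac{A_k(x)}{B_k(x)}=\frac{A_k(x)}{B_k(x)}.
\]
Equivalently, one can complete the square,
\[
f_k(y_k)=\frac{A_k(x)}{B_k(x)}-B_k(x)\Big(y_k-\frac{\sqrt{A_k(x)}}{B_k(x)}\Big)^2,
\]
which shows the bound $f_k(y_k)\le A_k(x)/B_k(x)$ for all $y_k$, with equality if and only if $y_k=y_k^\star(x)$. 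I expect this completed-square identity to be the cleanest way to handle both the value and the uniqueness claims. Summing over $k$ gives
\[
\max_{y}\sum_{k=1}^K f_k(y_k)=\sum_{k=1}^K\frac{A_k(x)}{B_k(x)}.
\]

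Next I will derive the equivalence of solutions. For the forward direction, suppose $x^\star$ solves \eqref{prob:scalar FP}, and set $y^\star=y^\star(x^\star)$. Then for any feasible $(x,y)$,
\[
\sum_k f_k(x,y_k)\le\sum_k\frac{A_k(x)}{B_k(x)}\le\sum_k\frac{A_k(x^\star)}{B_k(x^\star)}=\sum_k f_k(x^\star,y_k^\star).
\]
Hence $(x^\star,y^\star)$ solves \eqref{prob:scalar FP QT}. For the converse, suppose $(x^\star,y^\star)$ solves \eqref{prob:scalar FP QT}. Optimality in $y$ forces $y^\star=y^\star(x^\star)$ by uniqueness, so the optimal value of \eqref{prob:scalar FP QT} equals $\sum_k A_k(x^\star)/B_k(x^\star)$. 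Moreover, for any $x\in\mathcal X$,
\[
\sum_k\frac{A_k(x)}{B_k(x)}=\sum_k f_k\big(x,y^\star(x)\big)\le\sum_k\frac{A_k(x^\star)}{B_k(x^\star)}.
\]
Therefore $x^\star$ solves \eqref{prob:scalar FP}.

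There is no real obstacle here. The only care needed is to use $A_k\ge0$ so that the square root is defined, and $B_k>0$ for strict concavity, which guarantees that the maximizer in $y_k$ exists and is unique. The statement's "if and only if" should be read with $y^\star$ taken to be the optimal $y$ given $x^\star$, namely $y_k^\star=\sqrt{A_k(x^\star)}/B_k(x^\star)$.
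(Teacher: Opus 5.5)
Your proof is correct and follows essentially the same route as the paper. The paper cites this result rather than proving it, but its proof of the matrix generalization (Theorem~\ref{prop matrix QT}, which reduces to this statement when $m=\ell=1$ and $Z_k(x)=\sqrt{A_k(x)}$) is exactly your partial-maximization argument: solve for the optimal auxiliary variable in closed form, here $y_k^\star=\sqrt{A_k(x)}/B_k(x)$, and substitute it back to recover the original objective; your completed-square identity makes explicit the uniqueness of $y_k^\star$ and the two directions of the ``if and only if'' that the paper leaves implicit.
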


We propose optimizing $x$ and $\{y_k\}$ iteratively. When $x$ is held fixed, each $y_k$ can be optimally determined as
\begin{align}
    y^\star_k=\frac{\sqrt{A_k(x)}}{B_k(x)}.
\end{align}
Furthermore, under a \emph{generalized concave-convex condition} \cite{shen2018fractional} wherein each $A_k(x)$ is a concave function, each $B_k(x)$ is a convex function, and $\mathcal X$ is a convex set, it can be shown that the new problem \eqref{prob:scalar FP QT} is convex in $x$ when $\{y_k\}$ are held fixed. Thus, the alternating optimization between $x$ and $\{y_k\}$ can be performed efficiently.

Now let us return to the NCut problem in \eqref{prob:NCut} and apply the above FP technique to it. Treating $\bx_k^\top\bW\bx_k$ and $\bx_{k}^\top\bD\bx_{k}$ respectively as numerator and denominator, we can recast problem \eqref{prob:NCut} into
\begin{subequations}
\label{prob:NCut scalar QT}
\begin{align}
\underset{\bx_{k},\,y_k\in\mathbb R}{\text{maximize}}&\quad \sum_{k=1}^K\left(2y_k\sqrt{\bx_k^\top\bW\bx_k}-y^2_k{\bx_{k}^\top\bD\bx_{k}}\right)
\label{obj:NCut scalar QT}\\
 \text{subject to}&\quad \eqref{cons:unique cluster},\;\eqref{cons:discrete value}.
\end{align}
\end{subequations}
As before, we optimize $\{\bx_{k}\}$ and $\{y_k\}$ iteratively. For fixed $\{\bx_{k}\}$, the optimal solution of $y_k$ is
\begin{equation}
    y_k^\star = \frac{\sqrt{\bx_k^\top\bW\bx_k}}{\bx_{k}^\top\bD\bx_{k}}.
\end{equation}
It remains to optimize $\{\bx_{k}\}$ in \eqref{prob:NCut scalar QT} for fixed $\{y_k\}$. Observe that $\bx^\top\bW\bx$ is not a concave function of $\bx$ since $\bW$ is often a positive semi-definite matrix \cite{scholkopf1999advances}, and also that the constraint set of $\{\bx_{k}\}$ is not convex because of \eqref{cons:discrete value}, so the aforementioned generalized concave-convex condition does not hold here. As a result, solving for $\{\bx_{k}\}$ in \eqref{prob:NCut scalar QT} with $\{y_k\}$ held fixed is no longer a convex problem. The above alternating optimization between $\{\bx_{k}\}$ and $\{y_k\}$ can be recognized as the so-called \emph{Fast Iterative Normalized Cut (FINC)} algorithm of the recent work \cite{FINC}. Although it manages to decouple multiple ratios in the NCut problem, we are faced with a new challenging problem. The new problem is dealt with in a heuristic fashion in \cite{FINC}.
This fact perhaps explains why the FP approach has not yet been considered extensively in the literature despite the fractional structure of the NCut problem.

\subsection{Proposed Multidimensional Quadratic Transform}

We now proceed to a much more sophisticated FP toolkit that accounts for multidimensional ratios. To start, consider the following matrix extension of the traditional scalar-valued FP problem: each $A_k(x)\ge0$ is generalized as positive semi-definite $\bA_k(x)\in\mathbb S^{m\times m}_{+}$, while each $B_k(x)>0$ is generalized as positive definite $\bB_k(x)\in\mathbb S^{m\times m}_{++}$. Accordingly, the ratio term is extended to the matrix form as
\begin{equation*}
    \frac{A_k(x)}{B_k(x)} \;\Longrightarrow\; \bB_k(x)^{-1}\bA_k(x).
\end{equation*}
We then arrive at a matrix extension of the sum-of-ratios problem \eqref{prob:scalar FP}:
\begin{align}
\label{prob:matrix FP}
    \underset{x\in\mathcal{X}}{\text{maximize}}\quad \sum_{k=1}^K\mathrm{Tr}\left(\bB_k^{-1}(x)\bA_k(x)\right).
\end{align}
One main result of this paper is that the quadratic transform in Proposition \ref{prop:scalar QT} carries over to the matrix ratio case, as stated in the following proposition.

\begin{theorem}[Multidimensional Quadratic Transform]
\label{prop matrix QT}
Suppose that each $\bA_k(x)\in\mathbb S^{m\times m}_{+}$ can be factorized as
\begin{equation}
    \bA_k(x) = [\bZ_k(x)]^\top[\bZ_k(x)]\quad\text{where}\;\; \bZ_k(x)\in\mathbb R^{\ell\times m}
\end{equation}
for some positive integer $\ell$. The matrix FP problem \eqref{prob:matrix FP} is then equivalent to
\begin{align}
\label{prob:matrix FP QT}
\underset{x\in\mathcal{X},\,\bY_{k}\in\mathbb{R}^{\ell\times m}}{\text{maximize}}\;&\sum_{k=1}^K\mathrm{Tr}\left(2\bY_k[\bZ_k(x)]^\top-\bY_k \bB_k(x)\bY_k^\top\right),
\end{align}
\end{theorem}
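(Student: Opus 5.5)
The plan is to show that, for each fixed $x\in\mathcal X$, maximizing the objective of \eqref{prob:matrix FP QT} over $\{\bY_k\}$ returns exactly the objective of \eqref{prob:matrix FP}. Equivalence of the two problems then follows by the usual partial-maximization argument: $x^\star$ solves \eqref{prob:matrix FP} if and only if $(x^\star,\{\bY_k^\star\})$ solves \eqref{prob:matrix FP QT}, where $\bY_k^\star$ is the inner maximizer at $x^\star$. Because the objective of \eqref{prob:matrix FP QT} separates across $k$, I would fix $x$ and a single index $k$. Abbreviate $\bZ=\bZ_k(x)\in\mathbb R^{\ell\times m}$ and $\bB=\bB_k(x)\in\mathbb S_{++}^{m\times m}$, and study the function $f(\bY)=\tr(2\bY\bZ^\top-\bY\bB\bY^\top)$ over $\bY\in\mathbb R^{\ell\times m}$.

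\textbf{Key step (completing the square).} Since $\bB\succ 0$, the symmetric square root $\bB^{\frac12}$ and its inverse exist. I would verify the identity
\begin{equation*}
f(\bY)=\tr\big(\bZ\bB^{-1}\bZ^\top\big)-\big\|(\bY-\bZ\bB^{-1})\bB^{\frac12}\big\|_F^2 .
\end{equation*}
To do so, expand $\tr\big((\bY-\bZ\bB^{-1})\bB(\bY-\bZ\bB^{-1})^\top\big)$. This gives $\tr(\bY\bB\bY^\top)-2\tr(\bY\bZ^\top)+\tr(\bZ\bB^{-1}\bZ^\top)$, where the two cross terms are combined using $\tr(\bM)=\tr(\bM^\top)$ and the symmetry of $\bB^{-1}$. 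The squared-norm term is nonnegative and vanishes exactly when $\bY=\bZ\bB^{-1}$. Hence $\max_{\bY}f(\bY)=\tr(\bZ\bB^{-1}\bZ^\top)$, attained uniquely at $\bY_k^\star=\bZ_k(x)\bB_k^{-1}(x)$. Alternatively, one can note that $f$ is strictly concave in $\bY$ and set its gradient $2\bZ-2\bY\bB$ to zero, which gives the same maximizer.

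\textbf{Closing the argument.} By cyclicity of the trace and the factorization $\bA_k=\bZ_k^\top\bZ_k$, we have $\tr(\bZ\bB^{-1}\bZ^\top)=\tr(\bB^{-1}\bZ^\top\bZ)=\tr(\bB_k^{-1}(x)\bA_k(x))$, which is the $k$th term of \eqref{prob:matrix FP}. Summing over $k$ shows that $\max_{\{\bY_k\}}$ of the objective of \eqref{prob:matrix FP QT} equals the objective of \eqref{prob:matrix FP} at every $x$. It then follows that the optimal values coincide and that optimal solutions correspond, with the optimal $\bY_k$ given by $\bZ_k\bB_k^{-1}$.

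The only step needing care is the completion of the square in matrix form, specifically handling the transposes and the noncommuting factors correctly; once that identity is checked, the rest is bookkeeping.
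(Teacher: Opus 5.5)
Your proposal is correct and follows the paper's own route: fix $x$, maximize over each $\bY_k$ to obtain $\bY_k^\star=\bZ_k(x)\bB_k^{-1}(x)$, then substitute back to recover $\mathrm{Tr}\big(\bB_k^{-1}(x)\bA_k(x)\big)$. Your completing-the-square identity (equivalently, the strict-concavity and gradient argument) supplies the step that the paper only asserts with ``it can be shown.''
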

where an auxiliary variable $\bY_k\in\mathbb R^{\ell\times m}$ is introduced for each matrix ratio $\bB^{-1}_k(x)\bA_k(x)$.

\begin{IEEEproof} It can be shown that each $\bY_k$ in \eqref{prob:matrix FP QT} is always optimally determined as
\begin{align}
    \label{eq:update matrix Y}
    \bY^\star_k=\bZ_k(x)\bB_k^{-1}(x).
\end{align}
Substituting the above $\bY^\star_k$ in \eqref{prob:matrix FP QT} recovers the original problem \eqref{prob:matrix FP}.
\end{IEEEproof}
It is easy to see that the multidimensional quadratic transform in Theorem \ref{prop matrix QT} encompasses the conventional quadratic transform in Proposition \ref{prop:scalar QT} as a special case.

{\begin{remark}
The factorization $\bA_k(x)=\bZ_k(x)^\top\bZ_k(x)$ in Theorem~\ref{prop matrix QT} is not an additional modeling assumption; it follows from the positive semidefiniteness of $\bA_k(x)$, for example by taking a valid matrix square root. The role of Theorem~\ref{prop matrix QT} is therefore to provide an exact quadratic transform of the original matrix fractional objective. This differs from symmetric nonnegative matrix factorization\cite{kuang2012symmetric}, which treats graph clustering through a low-rank nonnegative factorization of a similarity matrix, and can be viewed as operating on a relaxed factorized formulation. In contrast, Theorem~\ref{prop matrix QT} is applied to the original NCut-related matrix fractional objective and preserves its optimizer.
\end{remark}}

{Theorem~\ref{prop matrix QT} naturally suggests an alternating optimization procedure for solving \eqref{prob:matrix FP}. Specifically, for any fixed $x$, each auxiliary variable $\bY_k$ admits the closed-form update in \eqref{eq:update matrix Y}. On the other hand, when $\{\bY_k\}$ are fixed, the reformulated objective in \eqref{prob:matrix FP QT} becomes an optimization problem only with respect to $x$. Therefore, instead of directly handling the original matrix fractional problem \eqref{prob:matrix FP}, we can iteratively update $\{\bY_k\}$ and $x$ based on the transformed formulation. This generic procedure is summarized in Algorithm~\ref{algorithm:general matrix FP}.}
\begin{algorithm}[t]
  \caption{Iterative Method for Solving Problem \eqref{prob:matrix FP}}
  \label{algorithm:general matrix FP}
  \begin{algorithmic}[1]
      \STATE Initialize $x$ to a feasible value.\\
      \REPEAT 
      \STATE Update each $\bY_k$ according to \eqref{eq:update matrix Y}, for $k=1,2,\ldots,K$.
      \STATE Update $x$ by solving the reformulated problem \eqref{prob:matrix FP QT}.
      \UNTIL{the original objective value converges} 
  \end{algorithmic}
\end{algorithm}

\subsection{Connection to MM Theory}
It turns out that our proposed multidimensional quadratic transform has a close connection with the minorization-maximization method. Based on this connection, we further show the convergence behavior of Algorithm \ref{algorithm:FP}. 
We begin with a brief review of the MM theory \cite{sun2016majorization,razaviyayn2013unified}. Consider a general constrained optimization problem 
\begin{align}
\label{MM original prob}
    \underset{x\in\mathcal{X}}{\text{maximize}}\quad f(x).
\end{align}
Rather than solving the above problem directly, the MM theory deals with an approximation of \eqref{MM original prob} iteratively. In principle, 
we construct a \emph{surrogate function} $g(x|\hat{x})$ of $x$ given the condition parameter $\hat x\in\mathcal{X}$, such that
\begin{align}
   \label{g le}
    g(x|\hat x)&\le f(x), \\
    g(\hat x|\hat x)&= f(\hat x).
    \label{g eq}
\end{align}
Then the MM method solves a sequence of new problems of the surrogate function:
\begin{align}
\label{MM prob}
    \underset{x\in\mathcal{X}}{\text{maximize}}\quad g(x|\hat x),
\end{align}
with $\hat x$ iteratively updated to the previous solution $x$. Specifically, with the solution of \eqref{MM prob} in the previous iteration denoted by $\hat x$, we construct a surrogate function $g(x|\hat x)$ for the current iteration and then obtain the new solution $\hat x$ of \eqref{MM prob}, and so forth, as illustrated in Fig.~\ref{fig:MM}.

\vspace{1em}

\begin{figure}[t]
  \centering
  \includegraphics[width=1\linewidth]{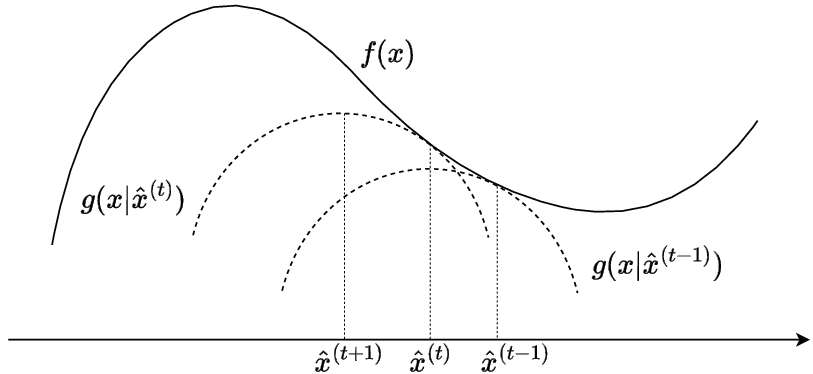}
  \vspace{1em}
  \caption{The value of the original objective $f(x)$ is nondecreasing after each iteration of the MM method, where $g(x|\hat x^{(t)})$ is the surrogate function constructed in the $t$th iteration.}
  \label{fig:MM}
\end{figure}

We now show that the alternating optimization between $x$ and $\{\bY_k\}$ in \eqref{prob:matrix FP QT} can be interpreted as an MM method.  
We now view the update of $\bY_k$ as a function of the previous solution $\hat x$, that is
\begin{align}
    \mathcal Y_{k}(\hat x)=\bZ_k(\hat x)\bB^{-1}_k(\hat x).
\end{align}
Substituting each $\bY_k$ with $\mathcal Y_k(\hat x)$ in the new objective function in \eqref{prob:matrix FP QT} gives rise to a function of $x$ conditioned on $\hat x$:
\begin{multline}
\label{g x|x}
    g(x|\hat x)=\sum_{k=1}^K \mathrm{Tr}(2\bZ_k(\hat x)\bB^{-1}_k(\hat x)[\bZ_k(x)]^\top\\-\bZ_k(\hat x)\bB^{-1}_k(\hat x)\bB_k(x)[\bZ_k(\hat x)\bB^{-1}_k(\hat x)]^\top)
    ).
\end{multline}
Notice that maximizing the objective function in \eqref{prob:matrix FP QT} with $\{\bY_k\}$ fixed is equivalent to maximizing $g(x|\hat x)$ where $\hat x$ is fixed at the solution of the previous iteration.

Most importantly, it can be shown that  $g(x|\hat x)$ meets the conditions \eqref{g le} and \eqref{g eq} for the original objective function in \eqref{prob:matrix FP}, so the alternating optimization between $x$ and $\{\bY_k\}$ in \eqref{prob:matrix FP QT} amounts to an MM procedure. In light of the MM theory, we immediately have the following result. 
\begin{proposition}
\label{prop:nondecreasing}
The alternating optimization between $x$ and $\{\bY_k\}$ in \eqref{prob:matrix FP QT} guarantees a nondecreasing convergence of the original optimization objective in \eqref{prob:matrix FP}. Furthermore, if functions $\bZ_k(x)$ and $\bB_k(x)$ are differentiable, then the alternating optimization converges to a stationary point of the original problem \eqref{prob:matrix FP}.
\end{proposition}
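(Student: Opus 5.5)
The proof splits into two parts: first verify that the surrogate $g(x|\hat x)$ in \eqref{g x|x} satisfies the minorization conditions \eqref{g le}--\eqref{g eq}, then invoke standard MM arguments.

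\textbf{Minorization.} Fix $\hat x$ and abbreviate $\hat{\bY}_k=\mathcal Y_k(\hat x)$. For each $k$, consider the function
\[
h_k(\bY)=\mathrm{Tr}\big(2\bY\bZ_k(x)^\top-\bY\bB_k(x)\bY^\top\big).
\]
Since $\bB_k(x)\succ 0$, $h_k$ is strictly concave in $\bY$. Setting its gradient to zero gives $\bY=\bZ_k(x)\bB_k^{-1}(x)$. Completing the square yields the identity
\[
h_k(\bY)=\mathrm{Tr}\big(\bB_k^{-1}(x)\bA_k(x)\big)-\mathrm{Tr}\big((\bY-\bZ_k\bB_k^{-1})\bB_k(x)(\bY-\bZ_k\bB_k^{-1})^\top\big),
\]
which uses $\bA_k=\bZ_k^\top\bZ_k$ and the cyclicity of the trace. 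The subtracted term is nonnegative. Hence $h_k(\hat{\bY}_k)\le \mathrm{Tr}(\bB_k^{-1}(x)\bA_k(x))$, with equality at $x=\hat x$. Summing over $k$ gives $g(x|\hat x)\le f(x)$ and $g(\hat x|\hat x)=f(\hat x)$. This is also the content behind the proof of Theorem \ref{prop matrix QT}.

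\textbf{Monotonicity.} Let $x^{(t+1)}$ maximize $g(\cdot|x^{(t)})$. Then
\[
f(x^{(t+1)})\ge g(x^{(t+1)}|x^{(t)})\ge g(x^{(t)}|x^{(t)})=f(x^{(t)}).
\]
To conclude convergence of the objective values, I need $f$ bounded above on $\mathcal X$. I would argue via $\mathrm{Tr}(\bB_k^{-1}\bA_k)\le \lambda_{\max}(\bB_k^{-1})\,\mathrm{Tr}(\bA_k)$, assuming these quantities are bounded on $\mathcal X$. In the NCut application $\mathcal X$ is finite, so boundedness is automatic. I would state this boundedness as a mild standing assumption.

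\textbf{Stationarity.} This is the main obstacle. The plan is the standard MM argument of Razaviyayn et al.\ \cite{razaviyayn2013unified}, which needs two ingredients.

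First, I need first-order agreement: $\nabla_x g(x|\hat x)|_{x=\hat x}=\nabla f(\hat x)$ when $\bZ_k,\bB_k$ are differentiable. This follows because $f(x)-g(x|\hat x)$ is nonnegative and vanishes at $\hat x$, so it is minimized there. For an interior point of a differentiable function, the gradient of the difference is then zero. Alternatively, one can see it by an envelope/Danskin argument applied to $f(x)=\max_{\bY}\sum_k h_k$.

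Second, I need a limit argument. Take a limit point $\bar x$ of $\{x^{(t)}\}$, assuming $\mathcal X$ is closed and the iterates are bounded. From $g(x|x^{(t)})\le g(x^{(t+1)}|x^{(t)})\le f(x^{(t+1)})$ for all $x\in\mathcal X$, and using continuity of $g$ in both arguments, I pass to the limit along the subsequence. This gives $g(x|\bar x)\le g(\bar x|\bar x)$ for all $x\in\mathcal X$, so $\bar x$ maximizes $g(\cdot|\bar x)$. Its first-order optimality condition over $\mathcal X$, namely $\nabla_x g(\bar x|\bar x)^\top(x-\bar x)\le 0$, then transfers to $f$ by the gradient agreement. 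Hence $\bar x$ is stationary for \eqref{prob:matrix FP}.

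The delicate points are joint continuity of $g$ in $(x,\hat x)$, which holds since $\bB_k^{-1}$ is continuous on $\mathbb S_{++}$, and the requirement that $\mathcal X$ be convex and closed for the stationarity notion to make sense. I would state both explicitly as assumptions.
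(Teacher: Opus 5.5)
Your proposal is correct and follows the paper's route: verify that $g(x|\hat x)$ in \eqref{g x|x} satisfies the minorization conditions \eqref{g le}--\eqref{g eq}, then invoke MM theory. The paper gets minorization from the optimality of $\bY_k^\star=\bZ_k\bB_k^{-1}$ in Theorem~\ref{prop matrix QT} (your completing-the-square identity is the explicit form of that step) and only cites MM theory for stationarity. Your Razaviyayn-type argument (gradient agreement at $\hat x$ plus the limit-point step) is a faithful unpacking of that citation, and it correctly makes the implicit assumptions explicit: $f$ bounded, $\mathcal X$ closed and convex, and the conclusion holding for every limit point rather than for the whole sequence.
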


We now return to the NCut problem in \eqref{prob:NCut}. Similar to quadratic transform in Proposition \ref{prop:scalar QT}, the optimization of the auxiliary variable $\{\bY_k$\} with fixed $x$ can be found in closed form as given in \eqref{eq:update matrix Y}. The key step is to optimize $x$ for fixed $\{\bY_k\}$. Recall that the primal variable $x$ is still difficult to optimize for the NCut problem after applying the quadratic transform. In contrast, it turns out that the multidimensional quadratic transform in Proposition \ref{prop matrix QT} can lead us to an efficient iterative update of $x$ for the NCut problem scenario, as elaborated in the next section.

\section{FP-based approach to {NCut} problem}
\label{sec:method}

The goal of this section is to address problem \eqref{prob:NCut} by means of the multidimensional FP. We begin with a special case in which the similarity matrix $\bW$ is assumed to be positive semi-definite. Next, we discuss the indefinite $\bW$ case and how the similarity matrix is generated in the different task scenarios.

\subsection{Proposed Multidimensional FP-Based Clustering}
It is crucial to note that the numerator part $\bx_k^\top\bW\bx_k$ can be factorized as
\begin{equation}
    \bx_k^\top\bW\bx_k = \bz_k^\top\bz_k\quad\text{where}\;\; \bz_k = \bW^{\frac12}\bx_k.
\end{equation}
We remark that $\bW^{\frac12}$, i.e., the symmetric square root of $\bW$ \cite{boyd2004convex}, is guaranteed to exist because we have assumed that $\bW\in\mathbb S^{m\times m}_{+}$ for the current discussion. We then treat $\bx_k^\top\bW\bx_k$, $\bz_k$, and $\bx_{k}^\top\bD\bx_{k}$ as $\bA_k(x)$, $\bZ_k(x)$, and $\bB_k(x)$, respectively, in Theorem \ref{prop matrix QT}, with $m=1$ and $\ell=N$,
thus using the multidimensional quadratic transform to reformulate the NCut problem \eqref{prob:NCut} as
\begin{subequations}
\label{prob:NCut matrix FP}
\begin{align}
\underset{\bx_{k},\,\by_{k}\in\mathbb{R}^{N}}{\text{maximize}}&\quad \sum_{k=1}^K \mathrm{Tr}\left(2\by_{k}( {\bW}^{\frac{1}{2}}\bx_k)^\top-\by_{k}(\bx_{k}^\top\bD\bx_{k})\by_{k}^\top\right)\label{prob:NCut matrix FP:obj}\\
\text{subject to}&\quad \eqref{cons:unique cluster},\;\eqref{cons:discrete value}.
\end{align}
\end{subequations}
We then optimize $\bx_{k}$ and $\{\by_k\}$ iteratively. When $\{\bx_{k}\}$ is held fixed, each $\by_k$ is optimally determined as
\begin{equation}
\label{opt y}
\by^\star_k=\frac{{\bW}^{\frac{1}{2}}\bx_{k}}{\bx_k^\top\bD\bx_k}.
\end{equation}
Now the core question is whether $\{\bx_{k}\}$ could be efficiently solved when $\{\by_k\}$ are held fixed. The optimization objective of $\{\bx_{k}\}$ in \eqref{prob:NCut matrix FP:obj} for fixed $\{\by_k\}$ is written as
\begin{equation}
\label{h(X)}
   h(\bx_1,\ldots,\bx_{K}) = \sum_{k=1}^K \mathrm{Tr}\left(2\by_{k}( {\bW}^{\frac{1}{2}}\bx_k)^\top-\by_{k}(\bx_{k}^\top\bD\bx_{k})\by_{k}^\top\right).
\end{equation}
Recall from the equivalent formulation in problem~\eqref{prob:NCut-linear}
that, under the binary constraint $x_{ik}\in\{0,1\}$,
\begin{equation}
\label{xDx=Dx}
\bx_k^\top\bD\bx_k=\bm\delta^\top\bx_k.
\end{equation}
We can then rewrite $h(\bx_1,\ldots,\bx_K)$ as
\begin{align}
   h(\bx_1,\ldots,\bx_{K}) &= \sum_{k=1}^K \left(2\by_k^\top{\bW}^{\frac{1}{2}}\bx_{k}-\by_{k}^\top\by_{k}\bm\delta^\top\bx_{k}\right)\notag\\
   &= \sum^K_{k=1}\bmu^\top_k\bx_k,\label{f chain}
\end{align}
where 
\begin{equation}
\label{h}
    \bmu_k = 2\bW^{\frac12}\by_k-\bm\delta\by^\top_k\by_k.
\end{equation}
In \eqref{f chain}, the first equality follows since $\bx^\top_k\bD\bx_k$ is a scalar. Denote by $\mu_{ik}$ the $i$th component of $\bmu_k$. In light of \eqref{f chain}, we can readily maximize $h(\{\bx_{k}\})$ under the constraints \eqref{cons:unique cluster} and \eqref{cons:discrete value}. Intuitively, for each data point $i$, it is optimal to set $x_{ik}$ with the largest $\mu_{ik}$ to one, while setting the rest $x_{ik}$ to zero, i.e.,
\begin{equation}
\label{eq:update X}
    x^\star_{ik}=
    \begin{cases}
        \;\;1\quad &\text{if}\;\; k = \mathop{\arg}\limits_{k'}\max\, \mu_{ik'};\vspace{0.3em}\\
        \;\;0\quad &\text{otherwise.}
    \end{cases}
\end{equation}
If there exists a tie (i.e., when more than one cluster index $k'$ maximizes $\mu_{ik'}$ for same $i$) then break it randomly. Further, with $\by^\star_k$ in \eqref{opt y} plugged in \eqref{h}, we obtain an efficient computation of $\bmu_k$ as
\begin{align}
\bmu_k &= \frac{2{\bW}\bx_{k}}{\bx_k^\top\bD\bx_k}-\frac{\bm\delta\bx_k^\top\bW\bx_{k}}{(\bx_k^\top\bD\bx_k)^2}\notag\\
&=\frac{2{\bW}\bx_{k}}{\bm\delta^\top\bx_k}-\frac{\bm\delta\bx_k^\top\bW\bx_{k}}{(\bm\delta^\top\bx_k)^2}.
\label{eq:update H}
\end{align}
The merits of rewriting $\bmu_k$ as \eqref{eq:update H} are two-fold. First, it sidesteps the update of the auxiliary variables $\{\by_k\}$. Second, it no longer entails computing the square root of $\bW$. The resulting algorithm referred to as \emph{fractional programming-based clustering (FPC)} is summarized in the following.

\begin{algorithm}[t]
  \caption{Fractional Programming-Based Clustering}
  \label{algorithm:FP}
  \begin{algorithmic}[1]
      \STATE Initialize $\{\bx_{k}\}$ to a feasible value satisfying $\sum^K_{k=1}x_{ik}=1$ and $x_{ik}\in\{0,1\}$ for each $(i,k)$.\\
      \REPEAT 
      \STATE Update each $\bmu_k$ according to \eqref{eq:update H}.
      \STATE Update each $x_{ik}$ according to \eqref{eq:update X}.
      \UNTIL{the value of $h(\{\bx_{k}\})=\sum^K_{k=1}\bmu^\top_k\bx_k$ converges.} 
  \end{algorithmic}
\end{algorithm}

Clearly, the FPC algorithm is guaranteed to converge in terms of the new objective value $h(\bx_1,\ldots,\bx_{K})$, since the iterative update of $\{\bx_{k}\}$ and $\{\bmu_k\}$ in FPC amounts to a block coordinate ascent for problem \eqref{prob:NCut matrix FP} so that $h(\bx_1,\ldots,\bx_{K})$ is monotonically increasing after each iteration. We can actually claim a stronger result for FPC according to Proposition \ref{prop:nondecreasing}, as stated in the subsequent proposition.

\begin{proposition}
\label{prop:convergence}
    Not only the new objective value $h(\bx_1,\ldots,\bx_{K})$ in \eqref{h(X)} but also the original objective value of the NCut problem, $f(\bx_1,\ldots,\bx_{K})=\sum_{k=1}^K\frac{\bx_k^\top\bW\bx_k}{\bx_{k}^\top\bD\bx_{k}}$, is nondecreasing after each iteration of FPC.
\end{proposition}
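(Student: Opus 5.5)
The plan is to exhibit the FPC iteration as an instance of the MM procedure behind Proposition~\ref{prop:nondecreasing}. It then suffices to check the minorization conditions \eqref{g le} and \eqref{g eq} for the NCut instance and to use that each $\bx$-update exactly maximizes the surrogate over the feasible set defined by \eqref{cons:unique cluster} and \eqref{cons:discrete value}.

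Fix the current iterate $\hat\bX=(\hat\bx_1,\ldots,\hat\bx_K)$. Take $\by_k=\mathcal Y_k(\hat\bX)=\bW^{\frac12}\hat\bx_k/(\hat\bx_k^\top\bD\hat\bx_k)$ as in \eqref{opt y}, and define $g(\bX|\hat\bX)=h(\bx_1,\ldots,\bx_K)$ with these $\by_k$.

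The minorization property will be shown term by term. For each $k$ and any feasible $\bx_k$ (so $b_k=\bx_k^\top\bD\bx_k>0$; if a cluster is empty the ratio is undefined, so we restrict to nonempty clusters), complete the square:
\[
2\by_k^\top\bz_k-b_k\by_k^\top\by_k=\frac{\bz_k^\top\bz_k}{b_k}-b_k\Big\|\by_k-\frac{\bz_k}{b_k}\Big\|^2\le\frac{\bx_k^\top\bW\bx_k}{\bx_k^\top\bD\bx_k},
\]
where $\bz_k=\bW^{\frac12}\bx_k$. Equality holds at $\bx_k=\hat\bx_k$ because there $\by_k=\bz_k/b_k$. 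Summing over $k$ gives $g(\bX|\hat\bX)\le f(\bX)$ and $g(\hat\bX|\hat\bX)=f(\hat\bX)$. This is precisely the content of the proof of Theorem~\ref{prop matrix QT} with $m=1$, so the check can simply invoke it.

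Next, show that the FPC $\bx$-step maximizes $g(\cdot|\hat\bX)$. On the binary feasible set, \eqref{xDx=Dx} lets us write $g(\bX|\hat\bX)=\sum_k\bmu_k^\top\bx_k$, with $\bmu_k$ given by \eqref{eq:update H}, which is the same expression as \eqref{h} after substituting $\by_k$. The objective then separates over rows $i$ as $\sum_k\mu_{ik}x_{ik}$. Under the one-hot constraint, the rule \eqref{eq:update X} maximizes each row, and ties do not change the value. Hence the new iterate $\bX^{+}$ satisfies $g(\bX^+|\hat\bX)\ge g(\hat\bX|\hat\bX)$.

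Chaining the inequalities gives
\[
f(\bX^+)\ge g(\bX^+|\hat\bX)\ge g(\hat\bX|\hat\bX)=f(\hat\bX),
\]
so $f$ is nondecreasing. For $h$ itself, note that the $\by$-update is an exact maximization of \eqref{prob:NCut matrix FP:obj} in $\by$ with $\bx$ fixed, and the $\bx$-update is an exact maximization in $\bx$ with $\by$ fixed. Block coordinate ascent therefore makes $h$ nondecreasing. Both sequences are bounded above (since $f\le K$), so their values converge.

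The main obstacle is the subtlety that the substitution $\bx_k^\top\bD\bx_k=\bm\delta^\top\bx_k$ is valid only on binary vectors. The surrogate must therefore be compared with $f$ on the feasible set only, which is all that is needed. A second point to handle is that an update could empty a cluster, making $f$ undefined. I would treat this by assuming, or enforcing, that every cluster stays nonempty. Alternatively, one can observe that an empty cluster contributes $0$ to $h$, while its ratio term in $f$ can be taken as $0$, so the inequality chain still holds.
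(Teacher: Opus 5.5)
Your proposal is correct and follows the paper's route. The paper deduces this proposition directly from Proposition~\ref{prop:nondecreasing}: the surrogate built with $\by_k$ from \eqref{opt y} satisfies \eqref{g le}--\eqref{g eq}, the $\bx$-step exactly maximizes it, and block coordinate ascent handles $h$. Your completing-the-square check, the remark that $\bx_k^\top\bD\bx_k=\bm\delta^\top\bx_k$ is needed only on the binary feasible set, and the treatment of empty clusters make explicit details that the paper leaves as ``it can be shown.''
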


{ Since the general NCut problem with discrete constraints is NP-complete \cite{NCut}, this monotonic convergence result should not be interpreted as a global-optimality guarantee; our analysis focuses on monotonic improvement of the NCut objective and local convergence behavior of the iterative procedure.}

\subsection{Convergence Rate Analysis}
\label{subsec:rate}

Moreover, in light of the MM theory \cite{shen2024accelerating},
we can characterize the convergence rate of Algorithm
\ref{algorithm:FP}. To facilitate the analysis, we see the
objective function of the NCut problem as a function of
$\bar\bx$, where
$\bar\bx=[\bx_1,\bx_2,\ldots,\bx_K]$ is an $NK\times 1$
vector.
Denote by $f(\cdot)$ the NCut objective value,
$\bar\bx^*$ the local maximizer under consideration,
$t$ the iterate index, $R$ the radius of the local neighborhood
around $\bar\bx^*$, $L$ the Lipschitz constant of
$\nabla^2 f(\bar\bx)$, $h(\bar\bx,\bar\by)$ the new objective
function defined below, where
$\bar\by=[\by_1,\by_2,\ldots,\by_K]$, and $\bar\bx^{t}$ the
solution after $t$ iterates.
Following the analysis steps in \cite{shen2024accelerating},
we characterize the convergence rate of Algorithm
\ref{algorithm:FP} as stated in the following proposition.

\begin{proposition}
\label{theorem:converge rate}
If the Hessian of the objective function $f(\bar\bx)$ of the
continuous relaxation of problem~\eqref{prob:NCut-linear} is
$L$-Lipschitz continuous, then the local
convergence rate of the alternating optimization is
\begin{align*}
|f(\bar\bx^*)-f(\bar\bx^{1})|
&\le
\frac{\Lambda R^2}{2}+\frac{LR^3}{6},\\
|f(\bar\bx^*)-f(\bar\bx^{t})|
&\le
\frac{2\Lambda R^2+2LR^3/3}{t+3},
\quad\text{for}\; t\ge 2.
\end{align*}
\end{proposition}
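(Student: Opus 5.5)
The argument follows the MM convergence-rate framework of \cite{shen2024accelerating}, applied to the continuous relaxation of \eqref{prob:NCut-linear}. Work with $f(\bar\bx)=\sum_k \bx_k^\top\bW\bx_k/\bm\delta^\top\bx_k$ and the surrogate $g(\bar\bx|\hat{\bar\bx})=h(\bar\bx,\bar\by(\hat{\bar\bx}))$ obtained by plugging \eqref{opt y} into \eqref{h(X)}. By the MM connection preceding Proposition~\ref{prop:nondecreasing}, $g$ minorizes $f$ and touches it at $\hat{\bar\bx}$. Since $g$ is also differentiable, first-order tangency $\nabla g(\hat{\bar\bx}|\hat{\bar\bx})=\nabla f(\hat{\bar\bx})$ follows. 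The gap $f(\bar\bx)-g(\bar\bx|\hat{\bar\bx})$ can be computed explicitly: per cluster it equals $\bm\delta^\top\bx_k\,\|\bW^{1/2}\bx_k/\bm\delta^\top\bx_k-\by_k\|^2$, a quadratic-type error. From this I would extract a constant $\Lambda$ with
\[
0\le f(\bar\bx)-g(\bar\bx|\hat{\bar\bx})\le \tfrac{\Lambda}{2}\|\bar\bx-\hat{\bar\bx}\|^2
\]
on the ball of radius $R$ around $\bar\bx^*$. The constant $\Lambda$ is essentially a bound on $\lambda_{\max}$ of the curvature difference, controlled by $\lambda_{\max}(\bW)$ and lower bounds on $\bm\delta^\top\bx_k$ near $\bar\bx^*$. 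I take this $\Lambda$ to be the one appearing in the statement.

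Next, combine this with the $L$-Lipschitz Hessian. Taylor's theorem at $\bar\bx^*$ gives $|f(\bar\bx)-f(\bar\bx^*)-\nabla f(\bar\bx^*)^\top(\bar\bx-\bar\bx^*)-\tfrac12(\bar\bx-\bar\bx^*)^\top\nabla^2 f(\bar\bx^*)(\bar\bx-\bar\bx^*)|\le \tfrac{L}{6}\|\bar\bx-\bar\bx^*\|^3$. At a local maximizer the first-order term is nonpositive over the feasible directions, and the Hessian term is bounded through $\Lambda$. For the first iterate, the surrogate step gives $f(\bar\bx^1)\ge g(\bar\bx^1|\bar\bx^0)\ge g(\bar\bx^*|\bar\bx^0)\ge f(\bar\bx^*)-\tfrac{\Lambda}{2}\|\bar\bx^*-\bar\bx^0\|^2$. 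With the cubic Taylor remainder accounted for, and using $\|\bar\bx^0-\bar\bx^*\|\le R$, this yields $\frac{\Lambda R^2}{2}+\frac{LR^3}{6}$.

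For $t\ge2$ I would run the standard sublinear argument. Let $\Delta_t=f(\bar\bx^*)-f(\bar\bx^t)$. The surrogate step compared with the convex combination $\bar\bx=\bar\bx^{t}+\alpha(\bar\bx^*-\bar\bx^t)$, $\alpha\in[0,1]$, gives
\[
\Delta_{t+1}\le (1-\alpha)\Delta_t+\alpha^2\big(\tfrac{\Lambda}{2}+\tfrac{LR}{6}\big)R^2 .
\]
This uses the local near-concavity of $f$ along segments, which holds up to the cubic Lipschitz-Hessian error, and the fact that iterates stay in the $R$-ball by monotonicity (Proposition~\ref{prop:convergence}). Choosing $\alpha=\min\{1,\Delta_t/(2C)\}$ with $C=(\tfrac{\Lambda}{2}+\tfrac{LR}{6})R^2$ gives $\Delta_{t+1}\le\Delta_t-\Delta_t^2/(4C)$. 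Together with $\Delta_1\le C$, the usual induction then yields $\Delta_t\le 4C/(t+3)=\frac{2\Lambda R^2+2LR^3/3}{t+3}$.

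The main obstacle is justifying the descent inequality from the convex combination. The NCut objective is not concave, so the step "$f$ along the segment is at least the linear interpolation" only holds approximately. The error must be absorbed into the $LR^3$ term via the Lipschitz Hessian, which needs $\nabla^2 f(\bar\bx^*)\preceq0$ locally and the iterates to remain in the neighborhood. A further difficulty is that the actual FPC iterate is discrete. I would state the result for the alternating optimization on the relaxed feasible set, as the statement does, and rely on \cite{shen2024accelerating} for the constants.
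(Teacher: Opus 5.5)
Your skeleton is the paper's: surrogate step, a comparison point on the segment toward $\bar\bx^*$, the choice $\alpha=\Delta_t/(2C)$, and the induction on $1/\Delta_t$. However, the step you flag as the main obstacle is a genuine gap, and the repair you propose cannot work. From $\nabla^2 f(\bar\bx^*)\preceq 0$ and an $L$-Lipschitz Hessian you only get $\nabla^2 f\preceq LR\,\bI$ on the ball. Hence, with $\bm d=\bar\bx^*-\bar\bx^t$, the best available estimate is $f(\bar\bx^t+\alpha\bm d)\ge(1-\alpha)f(\bar\bx^t)+\alpha f(\bar\bx^*)-\tfrac{\alpha(1-\alpha)}{2}LR\|\bm d\|^2$, and a defect of order $\alpha LR^3$ really can occur under your hypotheses. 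Because this defect is \emph{linear} in $\alpha$, it cannot be absorbed into $\alpha^2C$. The recursion you can actually derive is $\Delta_{t+1}\le(1-\alpha)\Delta_t+\tfrac{\alpha}{2}LR^3+\alpha^2C$. Its minimizing $\alpha$ is $0$ as soon as $\Delta_t\le LR^3/2$, so it only certifies $\Delta_t=O(LR^3)+O(1/t)$, not the stated rate. Expanding $f$ around $\bar\bx^*$ instead does not help, since the two cubic remainders again leave a term of order $\alpha LR^3$. Moreover, each summand $\|\bW^{\frac12}\bx_k\|^2/\bm\delta^\top\bx_k$ of the relaxed objective is quadratic-over-linear and hence convex, so there is no hidden concavity to recover. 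The paper does not derive this interpolation inequality at all. It restricts the feasible set to the ball $\mathcal X$ and \emph{assumes} that $f$ is concave on $\mathcal X$, which gives $f(\pi\bar\bx^*+(1-\pi)\bar\bx^{t-1})\ge\pi f(\bar\bx^*)+(1-\pi)f(\bar\bx^{t-1})$ exactly. You must add that hypothesis explicitly; it does not follow from the Lipschitz-Hessian assumption. Be aware that, combined with the convexity just noted, it forces $f$ to be affine on $\mathcal X$.

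The constants are also allocated differently from the paper. There, $\Lambda=\sup_{\bar\bx\in\mathcal X}\lambda_{\max}(\nabla^2\rho(\bar\bx|\bar\bx))$ is the curvature of the gap $\rho(\cdot|\bar\bx')=f-h(\cdot,\mathcal Y(\bar\bx'))$ at the tangency point, and $L$ enters only through this gap. Because $h$ is linear in $\bar\bx$, $\rho(\cdot|\bar\bx^{t-1})$ has an $L$-Lipschitz Hessian and vanishes together with its gradient at $\bar\bx^{t-1}$. Nesterov's cubic Taylor bound then gives $\rho(\bar\bx|\bar\bx^{t-1})\le\tfrac{\Lambda}{2}\|\bar\bx-\bar\bx^{t-1}\|^2+\tfrac{L}{6}\|\bar\bx-\bar\bx^{t-1}\|^3$. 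At the comparison point the displacement is $\pi\|\bar\bx^*-\bar\bx^{t-1}\|\le\pi R$, which is exactly why both terms carry $\pi^2$ (after using $\pi^3\le\pi^2$) rather than $\pi$. Your $\Lambda$, a uniform quadratic bound on $f-g$ over the ball, is a different and no smaller constant. With it the $LR^3$ term is superfluous in the surrogate bound. Your remark that the Hessian term of $f$ at $\bar\bx^*$ is ``bounded through $\Lambda$'' has no basis, since that $\Lambda$ controls $f-g$, not $\nabla^2 f(\bar\bx^*)$. Finally, monotonicity of $f$ does not keep the iterates in the $R$-ball. In the paper this holds by construction, because each surrogate is maximized over $\mathcal X$.
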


The rest of Section \ref{subsec:rate} is devoted to the proof
of the above proposition. Recall that problems~\eqref{prob:NCut}
and~\eqref{prob:NCut-linear} are equivalent under the binary
constraint, and therefore lead to the same FPC updates. For the
convergence-rate analysis, however, we relax the binary constraint
of problem~\eqref{prob:NCut-linear} as $0\le x_{ik}\le1$.

Accordingly, in this subsection we denote
\begin{equation}
\label{rate:f}
f(\bar\bx)
=
\sum_{k=1}^{K}
\frac{\bx_k^\top\bW\bx_k}
{\bm\delta^\top\bx_k},
\end{equation}
and use its quadratic-transform surrogate
\begin{equation}
\label{rate:h}
h(\bar\bx,\bar\by)
=
\sum_{k=1}^{K}
\left(
2\by_k^\top\bW^{\frac12}\bx_k
-
\by_k^\top\by_k\,\bm\delta^\top\bx_k
\right).
\end{equation}
For fixed $\bar\by$, $h(\bar\bx,\bar\by)$ is linear in
$\bar\bx$.

Note that the original NCut problem is nonconvex, and the global analysis is intractable. We would like to give a local analysis by restricting the constraint set to a small neighborhood
of a local maximizer, i.e.,
\begin{align}
\label{small_constraint}
\mathcal{X}
=
\{\bar\bx:\|\bar\bx-\bar\bx^*\|_2\le R\},
\end{align}
where $\bar\bx^*$ is the local maximizer under consideration
and the radius $R>0$ is sufficiently small. In the following
local analysis, we assume that $f(\bar\bx)$ is concave on
$\mathcal{X}$ and focus on the iterates that remain in
$\mathcal{X}$.

The convergence rate analysis also uses the MM interpretation.
Conditioned on $\bar\bx'\in\mathcal X$, write the difference
between $f(\bar\bx)$ and the surrogate function
$h(\bar\bx,\bar\by)$ as a function of
$\bar\bx\in\mathcal X$:
\begin{align*}
&\rho(\bar\bx|\bar\bx')\\
&=f(\bar\bx)-h(\bar\bx,\mathcal{Y}(\bar\bx'))\\
&=
\sum_{k=1}^K\!
\frac{\bx_k^\top\bW\bx_k}{\bm\delta^\top\bx_k}
-\!\sum_{k=1}^K\!\big(
2\by_k^\top\!(\bx_k')\bW^{\frac12}\bx_k
-\|\by_k(\bx_k')\|_2^2\bm\delta^\top\bx_k
\big).
\end{align*}
It can be readily shown that
\begin{subequations}
\label{delta:conditions}
\begin{align}
\rho(\bar\bx^t|\bar\bx^t)
&=0,\\
\nabla\rho(\bar\bx^t|\bar\bx^t)
&=\bm{0}.
\end{align}
\end{subequations}

Moreover, define the following quantity as
\begin{align}
\label{rate:Lambda}
\Lambda
=
\sup_{\bar\bx\in\mathcal X}
\lambda_{\max}
\left(
\nabla^{2}
\rho(\bar\bx|\bar\bx)
\right).
\end{align}
Since $h(\bar\bx,\mathcal{Y}(\bar\bx'))$ is linear in
$\bar\bx$ for fixed $\bar\bx'$, the Hessian of
$\rho(\bar\bx|\bar\bx')$ with respect to $\bar\bx$ has the
same Lipschitz constant $L$ as $\nabla^2 f(\bar\bx)$. Lemma 1.2.4 in \cite{Nesterov_book} states that for any
twice-differentiable function $\phi(x)$ with an $L$-Lipschitz
continuous Hessian, we have
\begin{multline}
\bigg|
\phi(x')-\phi(x)-\nabla \phi(x)^\top(x'-x)\\
-\frac{1}{2}(x'-x)^\top
\nabla^2\phi(x)(x'-x)
\bigg|
\le
\frac{L}{6}\|x'-x\|_2^3
\end{multline}
given any two feasible $x$ and $x'$.

Applying the above lemma to the function
$\rho(\bar\bx|\bar\bx^{t-1})$ as a function of $\bar\bx$,
with $\bar\bx^{t-1}$ fixed, and using the results in
\eqref{delta:conditions} give
\begin{align}
&\frac{L}{6}\|\bar\bx-\bar\bx^{t-1}\|_2^3
\notag\\
&\ge
\rho(\bar\bx|\bar\bx^{t-1})
\notag\\
&\quad
-\frac{1}{2}
(\bar\bx-\bar\bx^{t-1})^\top
\nabla^2
\rho(\bar\bx^{t-1}|\bar\bx^{t-1})
(\bar\bx-\bar\bx^{t-1})
\notag\\
&\ge
\rho(\bar\bx|\bar\bx^{t-1})
-\frac{\Lambda}{2}
\|\bar\bx-\bar\bx^{t-1}\|_2^2
\notag\\
&=
f(\bar\bx)
-h(\bar\bx,\bar\by^t)
-\frac{\Lambda}{2}
\|\bar\bx-\bar\bx^{t-1}\|_2^2
\notag\\
&\overset{(a)}{\ge}
f(\bar\bx)
-h(\bar\bx^t,\bar\by^t)
-\frac{\Lambda}{2}
\|\bar\bx-\bar\bx^{t-1}\|_2^2
\notag\\
&\overset{(b)}{\ge}
f(\bar\bx)
-h(\bar\bx^t,\bar\by^{t+1})
-\frac{\Lambda}{2}
\|\bar\bx-\bar\bx^{t-1}\|_2^2
\notag\\
&\overset{(c)}{=}
f(\bar\bx)
-f(\bar\bx^{t})
-\frac{\Lambda}{2}
\|\bar\bx-\bar\bx^{t-1}\|_2^2,
\label{rate:L}
\end{align}
where step $(a)$ follows since $\bar\bx^t$ maximizes
$h(\bar\bx,\bar\by)$ for the current
$\bar\by=\bar\by^t$, step $(b)$ follows since
$\bar\by^{t+1}$ maximizes $h(\bar\bx,\bar\by)$ for the current
$\bar\bx=\bar\bx^t$, and step $(c)$ follows by the property of
the surrogate function.

Following Nesterov's proof technique in \cite{Nesterov_book},
we let
\begin{equation}
\label{rate:x}
\bar\bx
=
\pi\bar\bx^*
+
(1-\pi)\bar\bx^{t-1},
\end{equation}
where $\pi\in[0,1]$. Then the concavity of $f(\bar\bx)$ on
$\mathcal X$ gives
\begin{equation}
\label{rate:concave}
f(\bar\bx)
\ge
\pi f(\bar\bx^*)
+
(1-\pi)f(\bar\bx^{t-1}).
\end{equation}

Denote the gap in the objective value as
\begin{equation}
v_t
=
f(\bar\bx^*)-f(\bar\bx^{t}).
\end{equation}
Substituting \eqref{rate:x} and \eqref{rate:concave} into
\eqref{rate:L} gives rise to
\begin{align}
v_t
&\le
(1-\pi)v_{t-1}
+
\frac{\pi^2\Lambda}{2}
\|\bar\bx^*-\bar\bx^{t-1}\|_2^2
\notag\\
&\qquad
+
\frac{\pi^3L}{6}
\|\bar\bx^*-\bar\bx^{t-1}\|_2^3
\notag\\
&\le
(1-\pi)v_{t-1}
+
\pi^2
\bigg(
\frac{\Lambda R^2}{2}
+
\frac{LR^3}{6}
\bigg),
\label{rate:v}
\end{align}
where the second inequality follows by
\eqref{small_constraint} and $0\le\pi\le1$.
The choice of $\pi$ depends on $t$.

When $t=1$, we let $\pi=1$ in \eqref{rate:v} and obtain
\begin{equation}
\label{rate:v1}
v_1
\le
\frac{\Lambda R^2}{2}
+
\frac{LR^3}{6}.
\end{equation}
When $t\ge2$, we let
\begin{equation}
\label{rate:pi}
\pi
=
\frac{v_{t-1}}
{\Lambda R^2+LR^3/3}.
\end{equation}
It can be shown by induction that the above $\pi$ is always
feasible (i.e., $0\le\pi\le1$) for all $t\ge2$.
Plugging \eqref{rate:pi} in \eqref{rate:v} yields
\begin{equation}
v_t
\le
v_{t-1}
\bigg(
1-
\frac{v_{t-1}}
{2\Lambda R^2+2LR^3/3}
\bigg),
\end{equation}
which can be further rewritten as follows. If $v_t=0$, the
desired bound holds immediately. Otherwise,
\begin{align}
\frac{1}{v_t}
&\ge
\frac{1}{v_{t-1}}
\cdot
\bigg(
1-
\frac{v_{t-1}}
{2\Lambda R^2+2LR^3/3}
\bigg)^{-1}
\notag\\
&\ge
\frac{1}{v_{t-1}}
\cdot
\bigg(
1+
\frac{v_{t-1}}
{2\Lambda R^2+2LR^3/3}
\bigg)
\notag\\
&=
\frac{1}{v_{t-1}}
+
\frac{1}
{2\Lambda R^2+2LR^3/3},
\label{rate:1/v}
\end{align}
where the second inequality follows since
$(1-a)^{-1}\ge1+a$ for any $0\le a<1$.
The result of \eqref{rate:1/v} immediately gives
\begin{align}
\frac{1}{v_t}
&\ge
\frac{1}{v_1}
+
\frac{t-1}
{2\Lambda R^2+2LR^3/3}
\notag\\
&\ge
\frac{t+3}
{2\Lambda R^2+2LR^3/3},
\end{align}
where the second inequality is due to \eqref{rate:v1}.
The proof is then completed.

{\subsection{Similarity Matrix Generation}
\label{subsec: similarity matrix}

We now discuss how the similarity matrix $\bW$ in problem \eqref{prob:NCut} is generated given a particular dataset. If the dataset is sufficiently small so that the graph representation is fully connected, i.e., each vertex is linked to the rest vertices, then it is common to use a positive definite kernel \cite{fasshauer2011positive} (e.g., the Gaussian kernel) to generate the similarity matrix. The similarity between any two data points based on the Gaussian kernel is given by
    \begin{align*}
        w_{ij} = \exp\left(\frac{-\|\bv_i-\bv_j\|^{2}_2}{2\sigma^2}\right),
    \end{align*}
    where $\bv_i$ and $\bv_j$ are the feature vectors of data points $i$ and $j$, and $\sigma$ is the width of Gaussian function. This setting meets our earlier assumption that the similarity matrix $\bW$ is positive semi-definite. 
    
    But what if some indefinite kernel has been adopted and hence $\bW$ is not necessarily positive semi-definite anymore? The following proposition provides a solution.

\begin{proposition}
\label{prop indefinite W}
    Suppose that the similarity matrix $\bW$ is indefinite. We can choose a sufficiently large $\alpha>0$ so that the new matrix
    \begin{equation}
    \label{eq:alpha}
        \widetilde{\bW} = \bW + \alpha \bD
    \end{equation}
    is positive semi-definite. Notice that such $\alpha$ must exist since 
    \begin{align}
    \label{alpha}
    \alpha = -\frac{\lambda_{\min}(\bW)}{\min_{i} d_{i}}
    \end{align}
    is a feasible choice. Then we can equivalently consider problem \eqref{prob:NCut} with $\widetilde\bW$ used in place of $\bW$, which can be readily addressed by the FPC algorithm.
\end{proposition}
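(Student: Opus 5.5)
Two facts need to be established: (i) the specified $\alpha$ makes $\widetilde\bW$ positive semi-definite, and (ii) replacing $\bW$ by $\widetilde\bW$ in problem \eqref{prob:NCut} does not change its set of optimal solutions. Everything reduces to elementary eigenvalue bounds plus the binary identity \eqref{xDx=Dx}.

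\textbf{Positive semi-definiteness.} Take any $\bv\in\mathbb R^N$. The Rayleigh-quotient bound gives $\bv^\top\bW\bv\ge\lambda_{\min}(\bW)\|\bv\|^2$. Since $\bD$ is diagonal with entries $d_i$, we also have $\bv^\top\bD\bv\ge(\min_i d_i)\|\bv\|^2$, and here $\min_i d_i>0$. This positivity is implicitly assumed, since otherwise $\bx_k^\top\bD\bx_k$ could vanish and the NCut ratios would be undefined.

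Because $\bW$ is indefinite, $\lambda_{\min}(\bW)<0$, so $\alpha=-\lambda_{\min}(\bW)/\min_i d_i>0$. With $\alpha>0$ we get
\[
\bv^\top\widetilde\bW\bv\ge\lambda_{\min}(\bW)\|\bv\|^2+\alpha(\min_i d_i)\|\bv\|^2=0.
\]
The same inequality holds for any larger $\alpha$, which justifies the phrase ``sufficiently large''.

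\textbf{Equivalence of the two problems.} For every feasible binary $\bx_k$ with $\bx_k\neq\bm 0$,
\[
\frac{\bx_k^\top\widetilde\bW\bx_k}{\bx_k^\top\bD\bx_k}=\frac{\bx_k^\top\bW\bx_k}{\bx_k^\top\bD\bx_k}+\alpha .
\]
Summing over $k$ shows that the objective \eqref{obj:NCut} with $\widetilde\bW$ equals the original objective plus the constant $K\alpha$. The constraints \eqref{cons:unique cluster}--\eqref{cons:discrete value} are untouched, so the maximizers coincide, and so do the iterate-wise comparisons used in Proposition~\ref{prop:convergence}.

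The one subtlety is that the FPC derivation also uses the degree vector $\bm\delta$ through \eqref{xDx=Dx}. The matrix $\bD$ must therefore be kept as the \emph{original} degree matrix, not recomputed from $\widetilde\bW$ (recomputing would give the degrees $(1+\alpha)d_i$ plus diagonal effects). With $\bD$ held fixed, $\widetilde\bW^{\frac12}$ exists, so the factorization in Theorem~\ref{prop matrix QT} applies and the FPC updates \eqref{eq:update H}--\eqref{eq:update X} go through verbatim with $\widetilde\bW$ in place of $\bW$.

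\textbf{Main obstacle.} The only real hazard is empty clusters, where $\bx_k=\bm 0$ makes the ratio undefined. I would handle this by the convention already implicit in \eqref{prob:NCut}, namely that every cluster is nonempty, or equivalently that such assignments are excluded. Beyond that, the argument is a direct computation.
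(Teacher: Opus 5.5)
Your proof is correct and follows essentially the paper's route: your Rayleigh-quotient bounds are the quadratic-form version of the paper's splitting $\widetilde\bW=(\bW-\lambda_{\min}(\bW)\bI)+\bE$ into two positive semi-definite pieces, and your equivalence step is the same constant shift by $K\alpha$. One small correction to your side remark: recomputing the degrees from $\widetilde\bW$ gives exactly $(1+\alpha)\bD$, with no extra diagonal effects, and this only rescales the shifted objective by $1/(1+\alpha)$ and every $\bmu_k$ uniformly by the same factor, so keeping the original $\bD$ is the natural reading of the statement but is not actually forced.
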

\begin{IEEEproof}
    First of all, because $\bW$ is indefinite, its minimum eigenvalue $\lambda_{\min}(\bW)$ must be negative. With $\alpha$ in \eqref{alpha}, the matrix $\widetilde\bW$ can be rewritten as
    \begin{align}
        \widetilde\bW &= \bW -\frac{\lambda_{\min}(\bW)}{\min_{i} d_{i}}\bD\notag\\
        &= \bW - \lambda_{\min}(\bW)\bI+\bE 
    \end{align}
    where
    \begin{align}
        \bE=-\lambda_{\min}(\bW) \mathrm{diag}\left[\frac{d_1}{\min_i d_i}-1,\ldots,\frac{d_N}{\min_i d_i}-1\right].
    \end{align}
    It is evident that $\bW - \lambda_{\min}(\bW)\bI$ and $\bE$ are both positive semi-definite, so $\widetilde\bW$ is positive semi-definite too. Moreover, it is easy to see that problem \eqref{prob:NCut} is equivalent to
\begin{subequations}
\label{prob:NCut new alpha}
\begin{align}
    \underset{\bx_{k}}{\text{maximize}}&\quad K\alpha+\sum_{k=1}^K\frac{\bx_k^\top\bW\bx_k}{\bx_{k}^\top\bD\bx_{k}}
\label{prob:NCut new alpha:obj}\\
 \text{subject to}&\quad \eqref{cons:unique cluster},\;\eqref{cons:discrete value},
\end{align}
\end{subequations}
which can be further rewritten as 
\begin{subequations}
\label{prob:NCut new alpha re}
\begin{align}
    \underset{\bx_{k}}{\text{maximize}}&\quad \sum_{k=1}^K\frac{\bx_k^\top\widetilde\bW\bx_k}{\bx_{k}^\top\bD\bx_{k}}
\label{prob:NCut new alpha re:obj}\\
 \text{subject to}&\quad \eqref{cons:unique cluster},\;\eqref{cons:discrete value},
\end{align}
\end{subequations}
The proof is then completed. 
\end{IEEEproof}

However, when it comes to a large dataset for which the fully connected graph is impractical, we must adopt a sparse graph representation in the sense that the connection weight between most pairs of vertices equals zero. In this case, the above fully-connected kernel-based similarity matrix generation is no longer feasible. Instead, we propose using the \emph{anchor graph} \cite{liu2010large} to create a sparsely connected graph representation for large datasets, as described in the following. We first generate a total of $M$ anchors that captures the overall topology of the graph (according to some K-means procedure as specified in \cite{liu2010large}). For a given integer $1\le k\le M$, we obtain a bipartite graph matrix $\bF=[f_{ij}]\in\mathbb R^{N\times M}$ that quantifies the similarity between each vertex and each anchor as 
\begin{align}
\label{large similarity}
        f_{ij}=\left\{\begin{array}{ll}
    \!\!\frac{\|\bv_i-\bu^{(i)}_{k+1}\|^{2}_2-\|\bv_i-\bu^{(i)}_{j}\|^{2}_2}{k\|\bv_i-\bu^{(i)}_{k+1}\|^{2}_2-\sum_{r=1}^k\|\bv_i-\bu^{(i)}_{r}\|^{2}_2}, & \text{if}\;j\leq k;
    \vspace{0.5em}\\
    \!\!0 & \text{otherwise},
    \end{array}\right.
    \end{align}
    where $\bv_i$ is the feature vector of data points $i$, and $\bu^{(i)}_j$ is the feature vector of the $j$th closest anchor to data point $i$. The similarity matrix $\bW$ is then generated as
    \begin{align*}
        \bW=\bF\bm{R}^{-1}\bF^\top,
    \end{align*}
    where $\bm{R}$ is a diagonal matrix whose $j$th diagonal entry equals $\sum_{i=1}^{N}f_{ij}$. It can be shown that the resulting $\bW$ is PSD.}

\subsection{Computational Complexity}
{Finally, we examine the computational complexity of the FPC algorithm under the two graph constructions. For the fully connected graph, the update of $\{\bx_{k}\}$ as in \eqref{eq:update X} incurs a computational complexity of $\mathcal{O}(KN)$, while the update of $\{\bmu_k\}$ as in \eqref{eq:update H} incurs a computational complexity of $\mathcal{O}(KN^2)$.} If $\bW$ is generated by the indefinite kernel (for the small dataset case), then we need to compute the smallest eigenvalue of $\bW$ as in \eqref{alpha}. Rather than computing all the eigenvalues and then picking the smallest, which incurs $\mathcal{O}(N^3)$, we propose a more efficient way of computing $\lambda_{\min}$:
\begin{enumerate}
    \item Find the largest eigenvalue of $\|\bW\|_\mathrm{F}\bI-\bW$, denoted as $\lambda_1$, by the power method \cite{stewart1973}, where $\|\cdot\|_\mathrm{F}$ is the Frobenius norm.
    \item Compute the smallest eigenvalue as $\lambda_{\min}(\bW)=\|\bW\|_\mathrm{F}-\lambda_1$.
\end{enumerate}
Thus, the overall complexity of finding $\alpha$ as in \eqref{alpha} is $\mathcal O (N^2)$. To sum up, the per-iteration complexity of FPC equals $\mathcal{O}(KN^2)$, while the traditional SC algorithm incurs a complexity of $O(N^3)$.

{For the anchor graph $\bW=\bF\bm{R}^{-1}\bF^\top$, let $M$ be the number of anchors and let each sample be connected to $k$ anchors. The product $\bW\bX$ in \eqref{eq:update H} can be computed as $\bF\bm{R}^{-1}(\bF^\top\bX)$ without explicitly forming $\bW$, which incurs a complexity of $\mathcal{O}(NkK)$. Together with the $\mathcal{O}(K^2N)$ update in \eqref{eq:update X}, the per-iteration complexity of FPC is $\mathcal{O}(NkK+NK+MK)$. In comparison, SC obtains the spectral embedding from an $M\times M$ matrix with complexity $\mathcal{O}(Nk^2+M^3+NkK)$, followed by K-means clustering with per-iteration complexity $\mathcal{O}(NK^2)$. Thus, the overall complexity of SC is $\mathcal{O}(Nk^2+M^3+NkK+NK^2)$. To sum up, both algorithms scale linearly with $N$.}

\section{Experiments}
\label{sec:Experiments}

\begin{table}[t]
  \centering
  \caption{Datasets used for dividing $N$ data points into $K$ clusters.}
  \begin{threeparttable}
    \resizebox{\columnwidth}{!}{%
    \begin{tabular}{ccccc}
    \toprule
    Dataset & $N$ & $K$ & \# of features & Source\\
    \midrule
    Shuttle & 15 & 2 & 6 & UCI\cite{UCI}\\
    Balloons & 16 & 2 & 4 & UCI\cite{UCI}  \\
    Breast & 106 & 6 & 9 & UCI\cite{UCI}  \\
    Thyroid & 215 & 3 & 5 & UCI\cite{UCI}\\
    Office & 2533 & 10 & 800 & Github\cite{transferlearning.xyz} \\
    Splice & 3175 & 3 & 240 & UCI\cite{UCI} \\
    Rice & 3810 & 2 & 7 & UCI\cite{UCI} \\
    Landsat & 6435 & 7 & 36 & UCI\cite{UCI} \\
    {Epileptic} & {11500} & {5} & {178} & {UCI\cite{UCI}}\\
    STL10 & 13000 & 10 & 27648 & \cite{coates2011analysis}\\
    {Sensorless}\footnote{Sensorless denotes the
Sensorless Drive Diagnosis dataset.} & {58509} & {11} & {48} & {UCI\cite{UCI}}\\
    CIFAR10 & 60000 & 10 & 3072 & \cite{krizhevsky2009learning}\\
    {MNIST} & {70000} & {10} & {784} & \cite{lecun1998gradient}\\
    {RT-IoT2022} & {123117} & {12} & {83} & {UCI\cite{UCI}}\\
    \bottomrule
    \end{tabular}%
    }
    \end{threeparttable}
  \label{tab:dataset}%
\end{table}%

{We evaluate FPC on twelve benchmark datasets and use two mini-datasets only for the optimality tests, as summarized in Table~\ref{tab:dataset}. The compared methods fall into two categories:
\begin{itemize}
\setlength{\itemsep}{0pt}
\setlength{\parskip}{0pt}
\setlength{\parsep}{0pt}
\item \emph{Two-stage methods} solve a relaxed spectral problem before discretization: SC uses K-means \cite{NCut}, ISR uses improved spectral rotation \cite{chen2017scalable}, and DSSRC uses graph-cut-guided nonheuristic discretization \cite{zhang2023discretize}.
\item \emph{Direct NCut optimization methods} include FINC, which uses a scalar quadratic transform and heuristic subproblem updates \cite{FINC}; FCD, which applies block coordinate descent to the discrete assignments \cite{FCD}; and FPC, which combines the multidimensional quadratic transform with weighted bipartite matching.
\end{itemize}} 
{For small datasets, the similarity matrix is generated by using the Gaussian kernel. 
The width of the Gaussian function, denoted as $\sigma$, is set to $\frac{\sqrt{2}}{2}$. For large datasets, we use the anchor graph with the number of anchors $M=1024$. The anchor points are randomly selected. The parameter $k$ in \eqref{large similarity} is set to $20$. For the anchor-graph experiments, all compared methods operate on the factorized similarity representation without explicitly forming the full matrix $\bW$. Each method is evaluated over 20 runs. FINC, FCD, and FPC use the corresponding SC result for initialization. All the experiments have been carried out on a desktop equipped with 2.10 GHz CPU$\times12$. In each table, we highlight the best performance by the bold font.}

\subsection{Image Segmentation by Clustering}
We further try out the different clustering algorithms in the image segmentation task. The image dataset from \cite{iccv09data} is used. We first partition each image into 500 superpixels and then perform the color histogram and the local binary pattern analysis to extract features as in \cite{cheng2011multi}. As shown in Fig.~\ref{fig:seg} (next page), the clustering by FPC gives clearer boundaries of the objects than other methods.

\begin{figure*}[!t]
  \centering
  \includegraphics[width=0.96\linewidth]{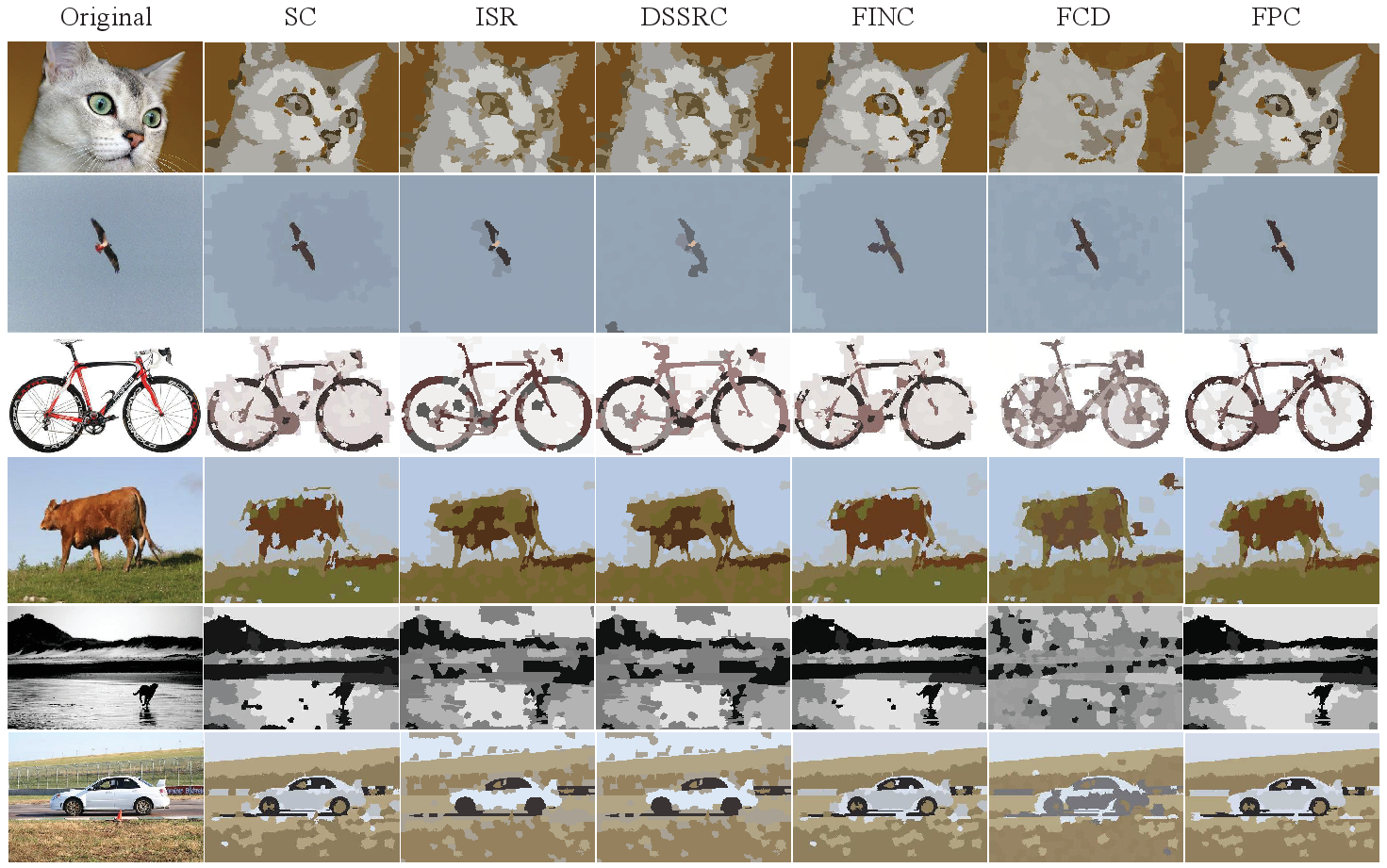}
  \caption{Image segmentation by the different algorithms.}
  \label{fig:seg}
\end{figure*}

\subsection{Optimality Tests}

\begin{figure*}[!t]
  \centering
  \begin{minipage}[t]{0.66\linewidth}
    \centering
    \subfigure[Dataset ``Shuttle'']{
      \includegraphics[width=0.474\linewidth]{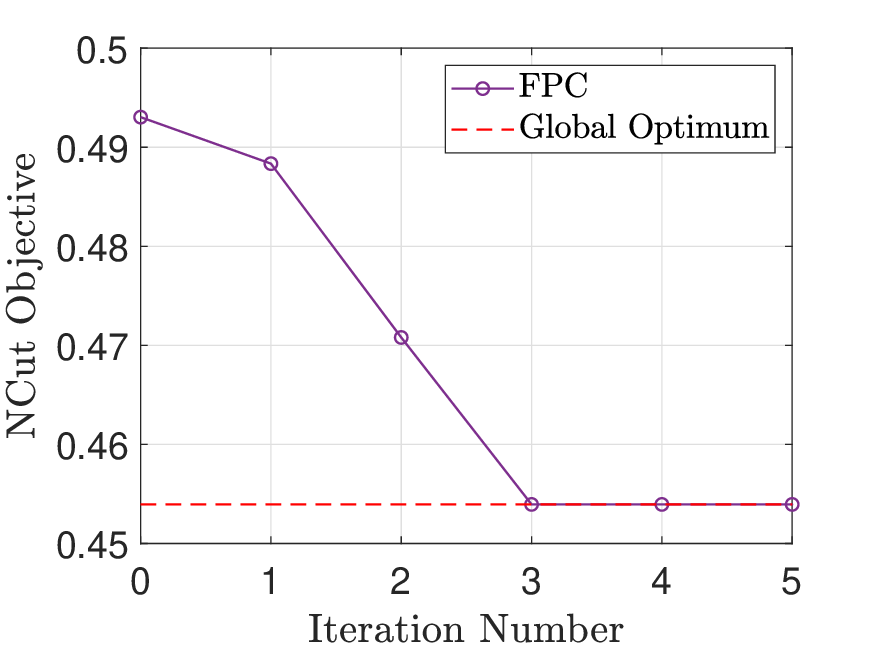}
    }
    \subfigure[Dataset ``Balloons'']{
      \includegraphics[width=0.474\linewidth]{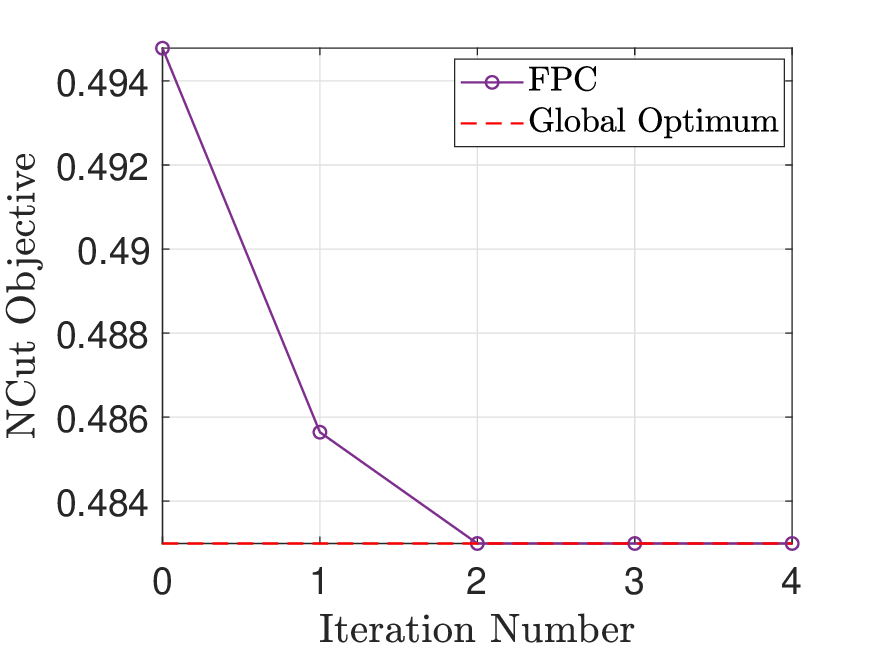}
    }
    \caption{Convergence in terms of the NCut objective value for two UCI datasets. FPC converges to the global optimum after only 3 iterations.}
    \label{fig:convergence:global}
  \end{minipage}
  \hfill
  \begin{minipage}[t]{0.315\linewidth}
    \centering
    \includegraphics[width=\linewidth]{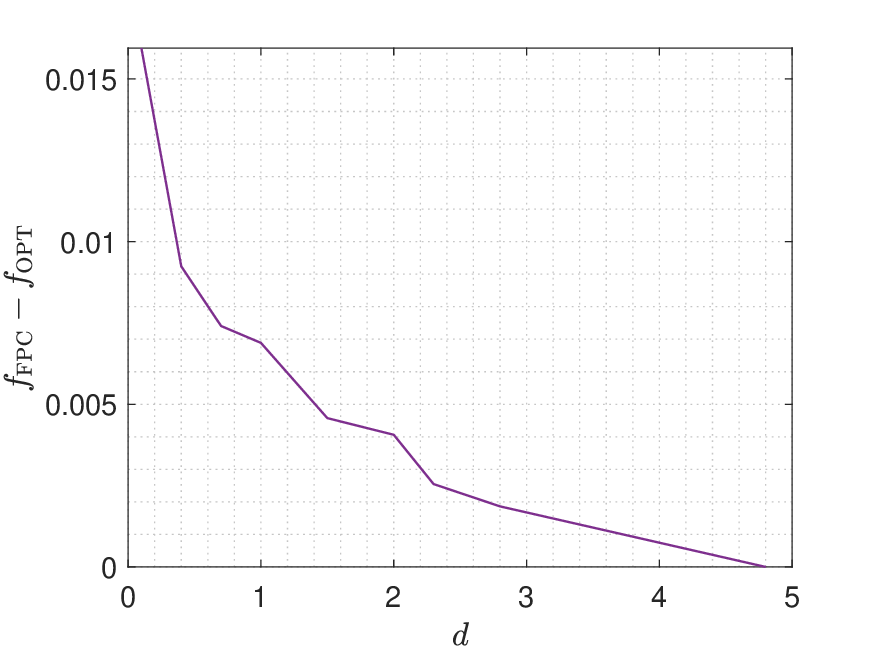}
    \caption{The gap between the NCut objective $f_\text{FPC}$ {achieved} by FPC and the global optimum $f_\text{OPT}$ for two Gaussian clusters $\mathcal N(0,\bm I_2)$ and $\mathcal N(d,\bm I_2)$.}
    \label{fig:converge_Gaussian}
  \end{minipage}
\end{figure*}

This subsection aims to evaluate the gap between the performance achieved by FPC and the global optimum. We use the exhaustive search to find the global optimum. As such, we consider two mini-datasets ``Shuttle'' and ``Balloons'' as shown in Table \ref{tab:dataset}.  Observe from Fig.~\ref{fig:convergence:global}  that FPC attains convergence to the global optimum after merely $3$ iterates.

Furthermore, we try out the FPC algorithm on the different clustering scenarios with the distance between two cluster centers being gradually reduced, and consequently the clustering problem becomes increasingly difficult. In this instance, two clusters are randomly generated as Gaussian mixtures. Assume that each data point is a 2D vector. The data points of the two clusters are randomly and independently generated according to the distributions $\mathcal N(0,\bm I_2)$ and $\mathcal N(d,\bm I_2)$, respectively, where $\bm I_2$ is the 2D identity matrix, and $d\ge0$ amounts to the distance between the two cluster centers. Observe from Fig.~\ref{fig:converge_Gaussian} that the solution obtained by our algorithm is fairly close to the global optimum.

\subsection{Sensitivity to Initialization}
\label{subsec:init_sensitivity}
\begin{figure}[!t]
  \centering
  \includegraphics[width=1\linewidth]{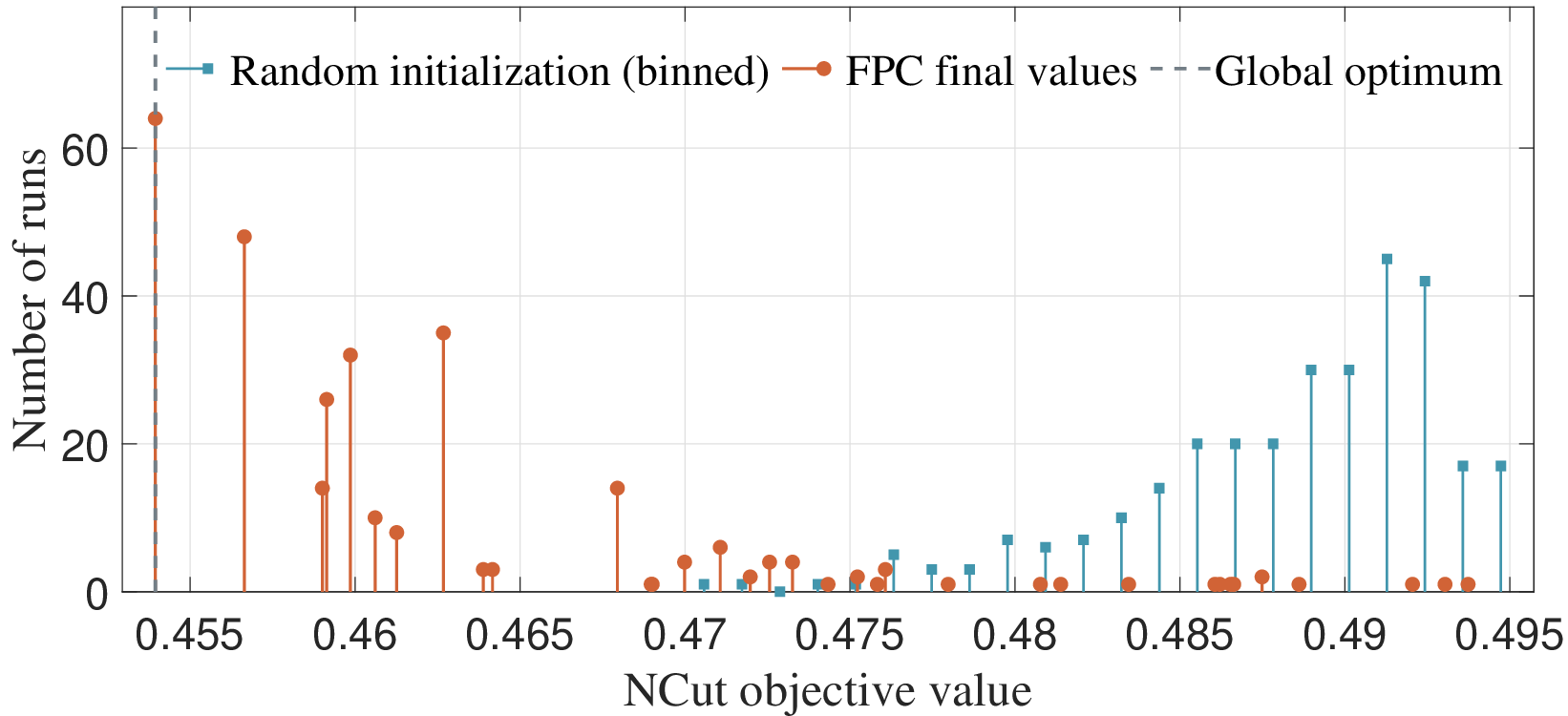}
  \caption{Sensitivity of FPC to random initialization on ''Shuttle'' dataset.}
  \label{fig:init_sensitivity}
\end{figure}
We further examine the sensitivity of FPC to initialization.
We consider the Shuttle dataset, for which the global optimum
can be obtained by exhaustive search, and run FPC from
multiple random feasible initializations. As shown in Fig.~\ref{fig:init_sensitivity} (next page),
the final NCut objective values are generally shifted toward lower values than the initial ones, and a considerable number of runs converge to the global optimum. The results also show that different initializations may lead to different local solutions, which motivates the use of multiple runs when evaluating the proposed algorithm.

\subsection{NCut Objective Optimization Performance}
{
\begin{table*}[!t]
\centering
\caption{NCut objective values achieved by the different algorithms on datasets constructed using fully connected and anchor graphs. A lower value indicates better performance.}
\label{tab:unified_ncut_objective}
\setlength{\tabcolsep}{2.5pt}
\renewcommand{\arraystretch}{1.05}
\resizebox{\textwidth}{!}{%
\begin{tabular}{lcccccccccccc}
\toprule
 & \multicolumn{6}{c}{Fully connected graph} & \multicolumn{6}{c}{Anchor graph} \\
\cmidrule(lr){2-7}\cmidrule(lr){8-13}
Method & Breast & Thyroid & Office & Splice & Rice & Landsat & Epileptic & STL10 & Sensorless & CIFAR10 & MNIST & RT-IoT2022 \\
\midrule
SC & \textbf{2.431325} & 0.983145 & 4.483929 & 0.997651 & 0.499193 & 2.490237 & 0.557434 & 1.844406 & 1.214324 & 2.073733 & 1.033129 & 0.254008 \\
ISR & \textbf{2.431325} & \textcolor[rgb]{0,.439,.753}{0.983144} & \textcolor[rgb]{1,0,0}{4.484089} & 0.997651 & 0.499193 & \textcolor[rgb]{1,0,0}{2.490253} & \textcolor[rgb]{1,0,0}{0.557458} & \textcolor[rgb]{1,0,0}{1.844957} & \textcolor[rgb]{1,0,0}{1.215714} & \textcolor[rgb]{1,0,0}{2.078630} & \textcolor[rgb]{1,0,0}{1.033709} & \textcolor[rgb]{1,0,0}{0.261397} \\
DSSRC & \textbf{2.431325} & \textcolor[rgb]{0,.439,.753}{0.983144} & \textcolor[rgb]{1,0,0}{4.484115} & \textcolor[rgb]{1,0,0}{0.997652} & 0.499193 & \textcolor[rgb]{1,0,0}{2.490246} & \textcolor[rgb]{1,0,0}{0.677723} & \textcolor[rgb]{1,0,0}{1.992010} & \textcolor[rgb]{1,0,0}{1.215714} & \textcolor[rgb]{1,0,0}{2.149170} & \textcolor[rgb]{1,0,0}{1.057807} & -- \\
FINC & \textcolor[rgb]{1,0,0}{2.432491} & \textcolor[rgb]{1,0,0}{0.988532} & \textcolor[rgb]{0,.439,.753}{4.483329} & 0.997651 & 0.499193 & 2.490237 & \textcolor[rgb]{1,0,0}{0.574284} & \textcolor[rgb]{1,0,0}{1.851028} & \textcolor[rgb]{0,.439,.753}{1.162947} & \textcolor[rgb]{0,.439,.753}{2.062307} & \textcolor[rgb]{0,.439,.753}{1.008371} & \textcolor[rgb]{0,.439,.753}{0.242925} \\
FCD & \textbf{2.431325} & 0.983145 & \textcolor[rgb]{0,.439,.753}{4.483670} & 0.997651 & 0.499193 & \textcolor[rgb]{0,.439,.753}{2.490138} & \textcolor[rgb]{0,.439,.753}{\textbf{0.555138}} & \textcolor[rgb]{0,.439,.753}{1.841004} & \textcolor[rgb]{0,.439,.753}{1.151903} & \textcolor[rgb]{0,.439,.753}{2.047859} & \textcolor[rgb]{0,.439,.753}{\textbf{0.995819}} & \textcolor[rgb]{0,.439,.753}{\textbf{0.241988}} \\
FPC & \textbf{2.431325} & \textcolor[rgb]{0,.439,.753}{\textbf{0.983118}} & \textcolor[rgb]{0,.439,.753}{\textbf{4.483264}} & \textcolor[rgb]{0,.439,.753}{\textbf{0.997638}} & \textcolor[rgb]{0,.439,.753}{\textbf{0.499192}} & \textcolor[rgb]{0,.439,.753}{\textbf{2.490049}} & \textcolor[rgb]{0,.439,.753}{0.555142} & \textcolor[rgb]{0,.439,.753}{\textbf{1.840981}} & \textcolor[rgb]{0,.439,.753}{\textbf{1.151889}} & \textcolor[rgb]{0,.439,.753}{\textbf{2.047775}} & \textcolor[rgb]{0,.439,.753}{\textbf{0.995819}} & \textcolor[rgb]{0,.439,.753}{\textbf{0.241988}} \\
\bottomrule
\end{tabular}%
}
\begin{minipage}{\textwidth}
\footnotesize Blue and red respectively denote improvement and degradation relative to SC. Bold values indicate the best displayed result on each dataset; {-- indicates that the method did not finish within the practical time budget.}
\end{minipage}
\end{table*}
}

{For each method, we perform 20 runs and report the solution attaining the smallest NCut objective value, using the same 20-run budget and selection rule for all compared methods. We then evaluate the performance of the different algorithms in minimizing $\mathrm{ncut}(\mathcal V_1,\mathcal V_2,\ldots,\mathcal V_K)$ as defined in \eqref{ncut def}  under the two graph-construction settings. Table~\ref{tab:unified_ncut_objective}  summarizes the results. 
The fully connected graph is used for the six smaller datasets, whereas the anchor graph is used for the six medium- and large-scale datasets. As expected, the direct NCut optimization methods generally attain lower objective values than the two-stage methods. This is because the latter first solve a relaxed spectral problem and then recover discrete assignments through a post-processing step, which does not directly optimize the original discrete NCut objective.  Among the direct optimization methods, FPC achieves the best or tied-best result on eleven of the twelve datasets. {For instance, compared with SC, FPC reduces the NCut objective by 5.14\%, 4.73\%, and 3.61\% on Sensorless, RT-IoT2022, and MNIST, respectively. Compared with DSSRC, the corresponding reductions reach 18.09\% on Epileptic, 7.58\% on STL10, and 5.86\% on MNIST.} FINC can even degrade the SC initialization on several datasets, since its transformed subproblem is handled heuristically and is not guaranteed to be solved optimally at each iteration, as discussed in Section~\ref{subsec:conventional FP methods}.}

\begin{figure*}[th!]
\centering
\subfigure[dataset ``Breast'']{
\includegraphics[width=0.32\linewidth]{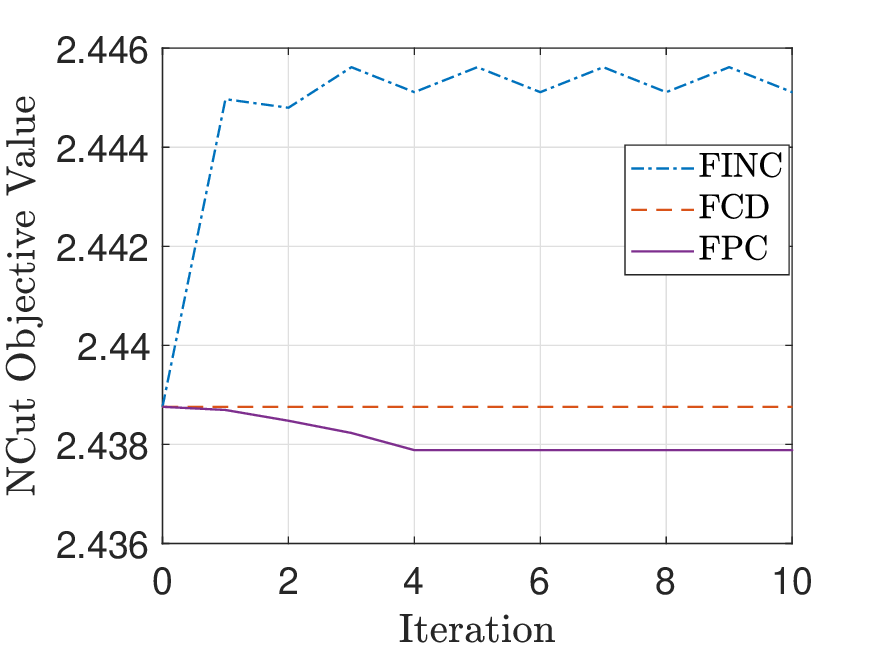}}
\subfigure[dataset ``Splice'']{
\includegraphics[width=0.32\linewidth]{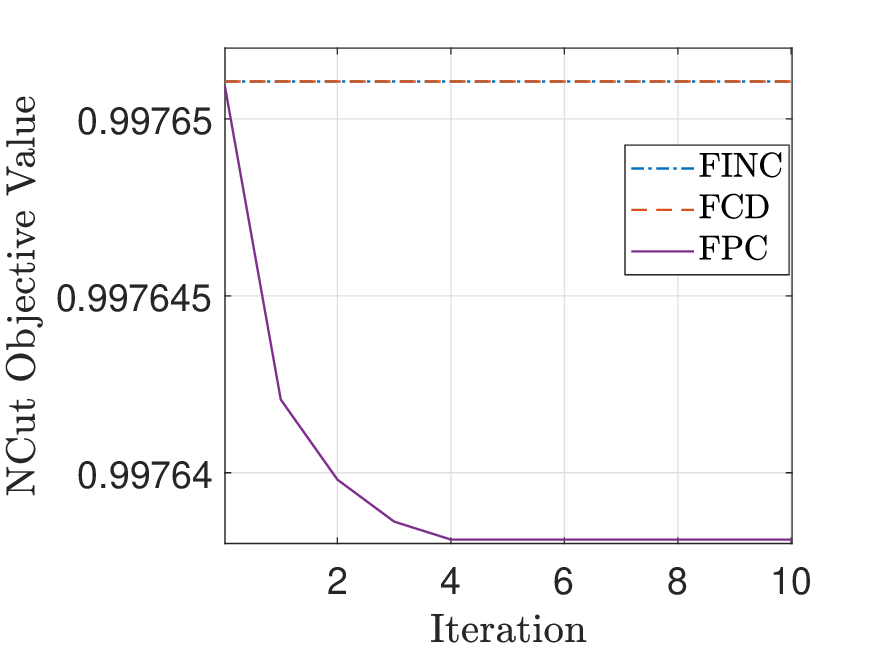}}
\subfigure[dataset ``Landsat'']{
\includegraphics[width=0.32\linewidth]
{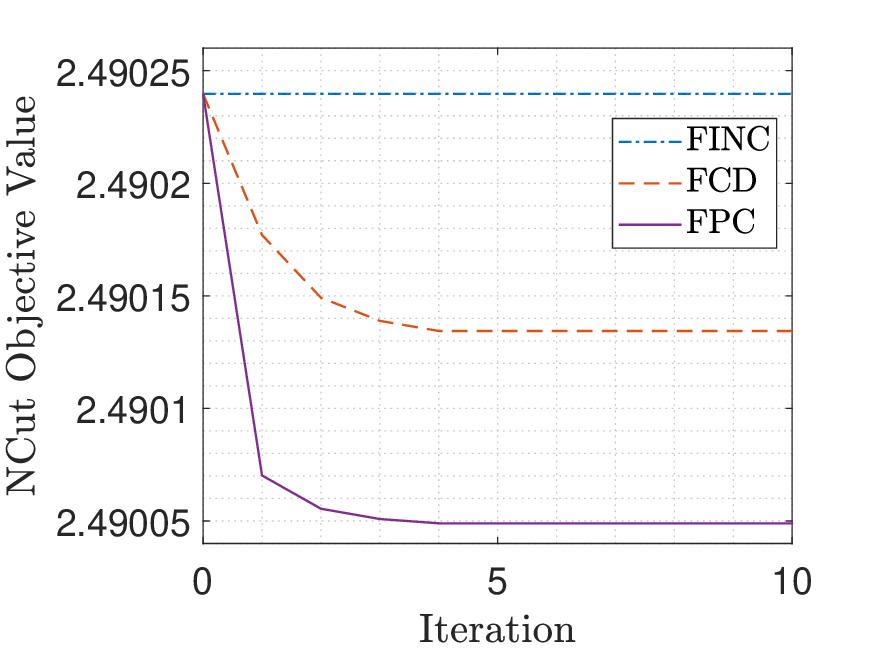}}
\caption{Convergence in terms of the NCut objective value for the different datasets with the similarity matrix generated as fully connected graph.}
\label{fig:convergence_fully_connected_graph}
\end{figure*}

\begin{figure*}[t]
\centering
\subfigure[dataset ``Epileptic'']{
\includegraphics[width=0.32\linewidth]{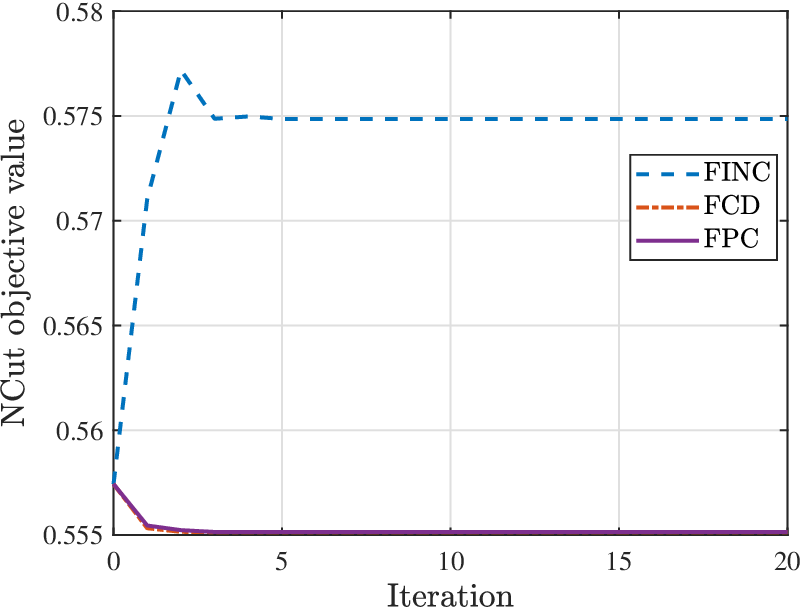}}
\subfigure[dataset ``STL10'']{
\includegraphics[width=0.32\linewidth]{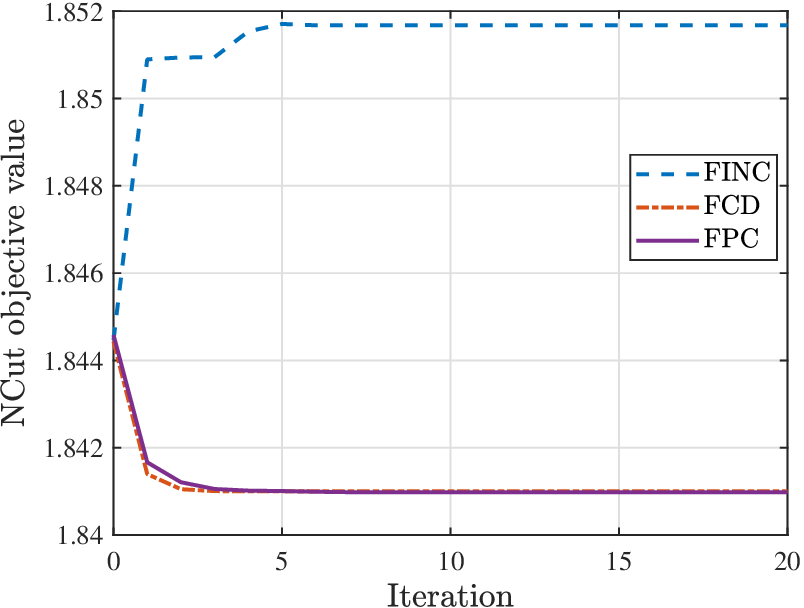}}
\subfigure[dataset ``CIFAR10'']{
\includegraphics[width=0.32\linewidth]{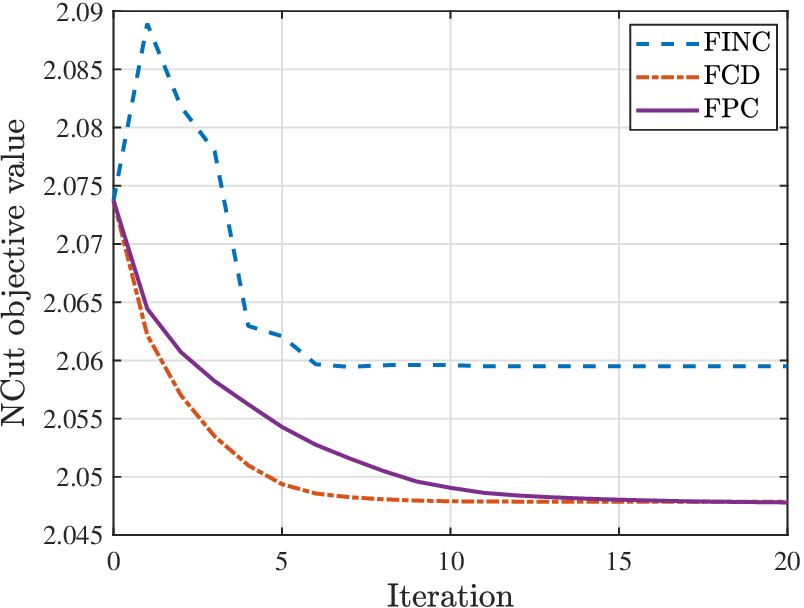}}
\caption{Convergence in terms of the NCut objective value for the different datasets with the similarity matrix generated as anchor graph.}
\label{fig:convergence_anchor_graph}
\end{figure*}

{
\begin{table*}[th!]
\centering
\caption{Clustering performance achieved by the different algorithms on datasets constructed using fully connected and anchor graphs. A higher value indicates better performance.}
\label{tab:unified_clustering_metrics}
\setlength{\tabcolsep}{2.5pt}
\renewcommand{\arraystretch}{1.05}
\resizebox{\textwidth}{!}{%
\begin{tabular}{llcccccccccccc}
\toprule
 &  & \multicolumn{6}{c}{Fully connected graph} & \multicolumn{6}{c}{Anchor graph} \\
\cmidrule(lr){3-8}\cmidrule(lr){9-14}
Metric & Method & Breast & Thyroid & Office & Splice & Rice & Landsat & Epileptic & STL10 & Sensorless & CIFAR10 & MNIST & RT-IoT2022 \\
\midrule
\multirow{6}{*}{ACC} & SC & \textbf{0.509434} & 0.888372 & 0.350178 & 0.664882 & 0.896325 & 0.668500 & 0.378261 & 0.205769 & 0.259054 & 0.196583 & 0.464271 & 0.229221 \\
  & ISR & \textbf{0.509434} & 0.883721 & 0.346625 & 0.664567 & 0.896325 & 0.668500 & 0.378087 & 0.205615 & 0.259071 & 0.200283 & 0.478629 & 0.228336 \\
  & DSSRC & \textbf{0.509434} & 0.883721 & 0.347809 & 0.660787 & 0.896588 & 0.677000 & 0.381478 & \textbf{0.217769} & 0.259071 & 0.206983 & 0.462614 & -- \\
  & FINC & \textbf{0.509434} & 0.604651 & \textbf{0.350572} & 0.664882 & 0.896325 & 0.668500 & \textbf{0.382870} & 0.206000 & 0.265771 & \textbf{0.209117} & 0.479329 & 0.235248 \\
  & FCD & \textbf{0.509434} & 0.888372 & 0.348993 & 0.664882 & 0.896325 & 0.654500 & 0.377826 & 0.206615 & \textbf{0.266198} & 0.198717 & 0.480314 & \textbf{0.235361} \\
  & FPC & \textbf{0.509434} & \textbf{0.893023} & 0.346230 & \textbf{0.673701} & \textbf{0.898163} & \textbf{0.683000} & 0.378000 & 0.206923 & 0.266147 & 0.198667 & \textbf{0.480329} & \textbf{0.235361} \\
\midrule
\multirow{6}{*}{NMI} & SC & \textbf{0.480282} & 0.511042 & 0.235126 & 0.285693 & 0.512112 & 0.611831 & 0.177041 & 0.108122 & \textbf{0.297855} & 0.067306 & 0.446930 & 0.268947 \\
  & ISR & \textbf{0.480282} & 0.495642 & 0.227571 & 0.285773 & 0.512112 & 0.598479 & 0.176937 & 0.108185 & 0.297419 & 0.075624 & 0.448668 & 0.268627 \\
  & DSSRC & \textbf{0.480282} & 0.495642 & 0.227507 & 0.283410 & 0.512938 & 0.589606 & \textbf{0.177504} & \textbf{0.117170} & 0.297419 & \textbf{0.085738} & 0.439852 & -- \\
  & FINC & 0.460500 & 0.270826 & \textbf{0.235585} & 0.285693 & 0.512112 & 0.611831 & 0.171389 & 0.107785 & 0.296373 & 0.073012 & 0.471641 & 0.269156 \\
  & FCD & \textbf{0.480282} & 0.511042 & 0.233231 & 0.285693 & 0.512112 & 0.612702 & 0.175288 & 0.108774 & 0.296980 & 0.070436 & 0.475403 & \textbf{0.269244} \\
  & FPC & \textbf{0.480282} & \textbf{0.526925} & 0.234607 & \textbf{0.295071} & \textbf{0.518077} & \textbf{0.618908} & 0.175697 & 0.108945 & 0.297035 & 0.070504 & \textbf{0.475513} & \textbf{0.269244} \\
\midrule
\multirow{6}{*}{ARI} & SC & \textbf{0.311287} & 0.623665 & \textbf{0.143893} & 0.268918 & 0.627953 & 0.478430 & 0.133450 & 0.055283 & 0.118808 & 0.035444 & 0.301169 & 0.080775 \\
  & ISR & \textbf{0.311287} & 0.608237 & 0.137564 & 0.268437 & 0.627953 & 0.463606 & 0.133174 & 0.055500 & 0.118674 & 0.038022 & 0.316367 & 0.080125 \\
  & DSSRC & \textbf{0.311287} & 0.608237 & 0.137418 & 0.264810 & 0.628785 & 0.460109 & 0.140195 & \textbf{0.059465} & 0.118674 & \textbf{0.044224} & 0.297796 & -- \\
  & FINC & 0.296489 & 0.206981 & 0.141879 & 0.268918 & 0.627953 & 0.478430 & \textbf{0.141443} & 0.054702 & 0.119099 & 0.039179 & 0.335177 & 0.081637 \\
  & FCD & \textbf{0.311287} & 0.623665 & 0.142037 & 0.268918 & 0.627953 & 0.485465 & 0.131139 & 0.055159 & 0.119302 & 0.039380 & 0.337417 & \textbf{0.081653} \\
  & FPC & \textbf{0.311287} & \textbf{0.639239} & 0.140155 & \textbf{0.280361} & \textbf{0.633763} & \textbf{0.522563} & 0.131358 & 0.055301 & \textbf{0.119322} & 0.039459 & \textbf{0.337461} & \textbf{0.081653} \\
\bottomrule
\end{tabular}%
}
\begin{minipage}{\textwidth}
\footnotesize Bold values indicate the best displayed result on each dataset; {-- indicates that the method did not finish within the practical time budget.}
\end{minipage}
\end{table*}
}

{We further compare the convergence behavior of the three direct optimization methods in Figs.~\ref{fig:convergence_fully_connected_graph} and~\ref{fig:convergence_anchor_graph} (next page). FPC monotonically improves upon the starting point on all six representative datasets under both graph constructions, while FCD remains at the SC initialization on Breast and Splice. This indicates that our method is more capable of escaping from local solutions than FCD. FINC exhibits less stable behavior: its objective can increase or oscillate, as observed on Breast and Epileptic, and it generally converges to a higher objective value than FPC. Observe also that FPC has high efficiency, with the convergence attained in fewer than 10 iterations for the datasets Breast and Splice with $N<4000$ and in fewer than 20 iterations for the dataset STL10 with $N=13000$.}

\subsection{Clustering Performance}

 Aside from the {NCut} optimization criterion, the following commonly used performance metrics in practice are considered for the different clustering algorithms: the accuracy (ACC), the normalized mutual information (NMI) \cite{NMI}, and the adjusted random index (ARI) \cite{ARI1}.  For consistency with the preceding NCut comparison, these metrics are evaluated on the same solutions selected according to the NCut objective over the 20 runs. Unlike $\mathrm{ncut}(\mathcal V_1,\mathcal V_2,\ldots,\mathcal V_K)$, the above metrics are proportional to the performance, i.e., the higher metric value, the better clustering. {The results are summarized in Table~\ref{tab:unified_clustering_metrics} on the next page. Although these performance metrics are not directly tied to the NCut objective, FPC still achieves the highest number of top rankings across all evaluation metrics. Specifically, it attains the best performance in ACC on 7 datasets, in NMI on 7 datasets, and in ARI on 8 datasets. }

\subsection{Saturation of NCut Objective Function for FPC}
\begin{figure*}[!t]
\centering
\subfigure[dataset ``Epileptic'']{
\includegraphics[width=0.32\linewidth]{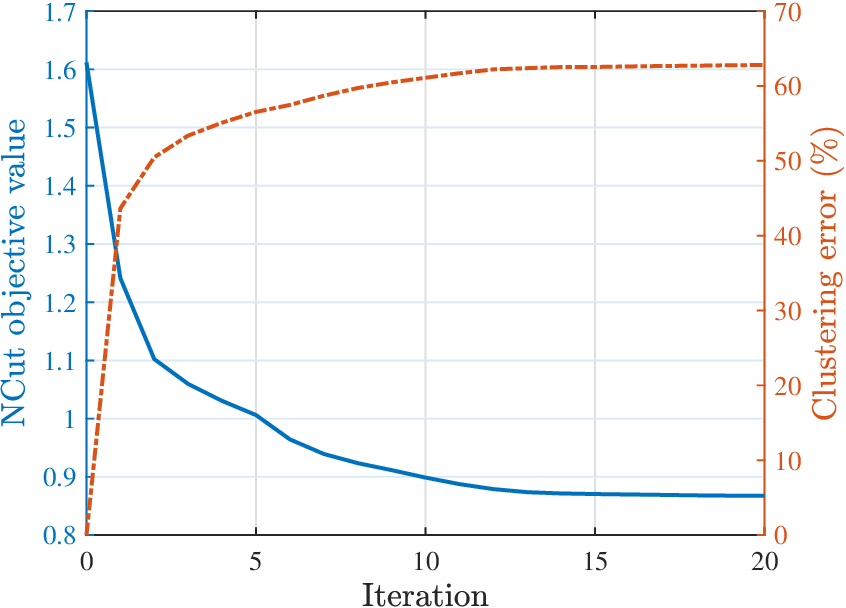}}
\subfigure[dataset ``STL10'']{
\includegraphics[width=0.32\linewidth]{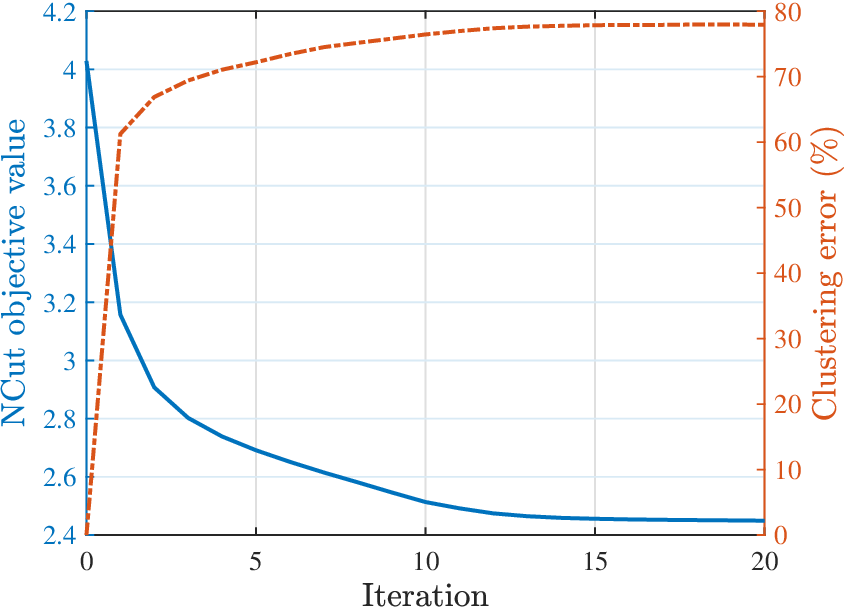}}
\subfigure[dataset ``CIFAR10'']{
\includegraphics[width=0.32\linewidth]{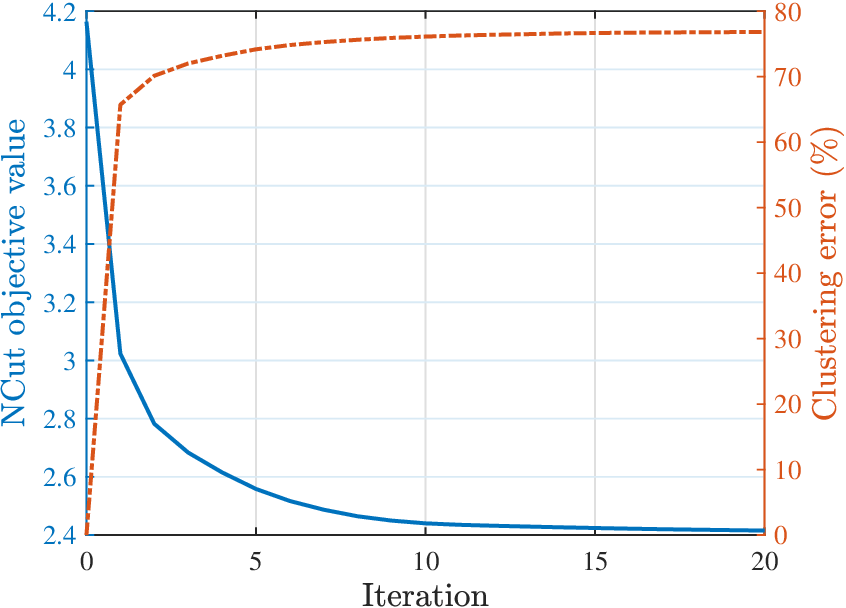}}
\caption{{Saturation behavior of FPC on Epileptic, STL10, and CIFAR10. FPC is initialized with the ground-truth labels, for which the clustering error is 0\%. The objective value decreases while the error increases.}}
\label{fig:saturation}
\end{figure*}

{
For many datasets, as the objective function is optimized, the accuracy (quality) of clustering is usually improved. But this works only up to a point. Beyond that, further optimization of the objective will not improve the quality of clustering because the objective function does not necessarily model the data in fine detail. This characteristic of clustering objective function is formalized as the \emph{saturation of objective function} according to \cite{ding2002cluster}. The saturation of objective function is also examined in our problem case. {For instance, FPC attains the lowest NCut objective values on Office, STL10, Sensorless, and CIFAR10. However, its ACC on these datasets
remains suboptimal.}

The saturation tests are conducted for the following datasets: Epileptic, STL10, and CIFAR10. The metric that reflects the accuracy quality is: 
\begin{align}
    \mathrm{Error} = 1-\mathrm{Accuracy}.
\end{align}
The convergence curves in terms of the NCut objective value and the error are shown in Fig.~\ref{fig:saturation}. The error starts at 0\% and then increases, while the objective function value improves (decreases). The final converged objective function value is termed the saturation value, and the converged clustering error (accuracy) is termed the saturation error (accuracy). Since NCut optimization prefers the converged solution even though its external error is higher, an error below the observed saturation error cannot be meaningfully interpreted as indicating a better solution under the NCut objective alone. In other words, NCut cannot reliably distinguish or rank solutions within this lower-error region according to their agreement with the ground truth. This criterion-specific interpretation does not imply that a lower clustering error lacks practical value; rather, it shows that such an advantage is not represented by the NCut objective.
}

\subsection{Time Consumption}
{
\begin{figure*}[t!]
  \centering
  \includegraphics[width=0.8\linewidth]{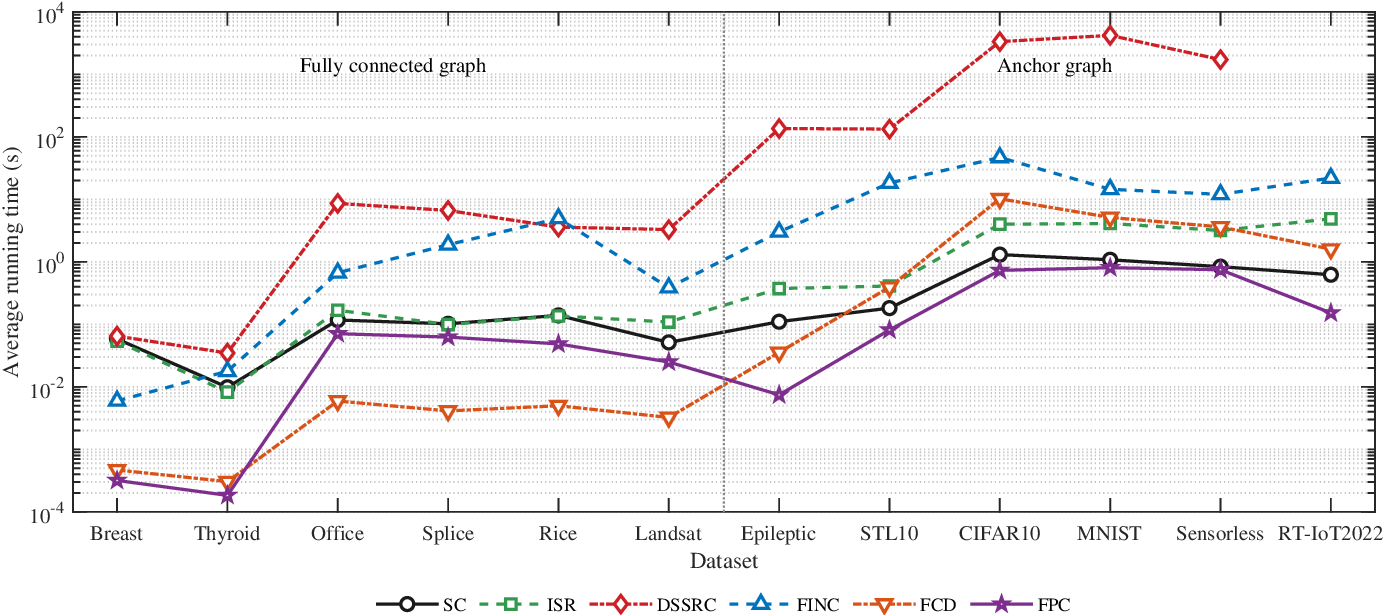}
  \caption{Average running time of the different algorithms on datasets constructed using fully connected and anchor graphs.}
  \label{fig:running time}
\end{figure*}
}

{Moreover, Fig.~\ref{fig:running time} compares the average running time under the two graph-construction settings. For a fair runtime comparison, the same timing convention is adopted for all methods, and the reported runtime includes only the corresponding optimization stage, excluding the initialization and common preprocessing stages. FPC is consistently faster than all three two-stage methods, i.e., SC, ISR, and DSSRC, on every dataset for which results are available.} {Quantitatively, under the fully connected graph setting, the average running time of FPC is 56.58\%, 63.87\%, 99.07\%, and 97.38\% lower than those of SC, ISR, DSSRC, and FINC, respectively. Under the anchor graph setting, FPC requires 38.92\%, 85.10\%, and 97.83\% less average running time than SC, ISR, and FINC, respectively. } {DSSRC is generally the most time-consuming baseline because its discrete mapping procedure is computationally expensive; on RT-IoT2022, it did not finish within a practical time budget, and hence no result is reported. FCD has the shortest running time on several datasets because it may terminate early at a suboptimal point. Overall, FPC achieves a favorable balance between computational efficiency and NCut objective improvement.}

\FloatBarrier

\section{Conclusion}
\label{sec:conclusion}

This work proposes a novel application of multidimensional FP to the NCut clustering, differing from the previous works that rely on the traditional scalar-ratio FP such as Dinkelbach's transform and quadratic transform. The main merit of using multidimensional FP is that the new 0-1 problem can be efficiently solved via linear search. On the analytical side, the resulting FPC algorithm can be interpreted as an MM procedure with provable monotonic convergence in terms of the NCut optimization criterion. Furthermore, we characterize the convergence rate of FPC. {The present work focuses on the multiclass NCut objective, while other graph partitioning criteria are beyond its main scope. Extending the proposed FP-based framework to online, streaming, multi-view, or distributed graph clustering would be an interesting direction for future research}.

\bibliographystyle{IEEEtran}
\bibliography{IEEEabrv,ref}

\label{sec:refs}

\end{document}